\documentclass[letterpaper]{article} 
\usepackage{aaai2026}  
\usepackage{times}  
\usepackage{helvet}  
\usepackage{courier}  
\usepackage[hyphens]{url}  
\usepackage{graphicx} 
\usepackage{natbib}   
\usepackage{caption} 
\usepackage{algorithm}
\usepackage{algorithmic}

\usepackage{newfloat}
\usepackage{listings}
\DeclareCaptionStyle{ruled}{labelfont=normalfont,labelsep=colon,strut=off} 
\floatstyle{ruled}
\newfloat{listing}{tb}{lst}{}
\floatname{listing}{Listing}
\usepackage{amsmath}
\usepackage{amssymb}
\usepackage{amsthm}
\usepackage{stmaryrd}
\usepackage{mathtools}
\usepackage{enumitem}
\usepackage{verbatim}

\theoremstyle{plain}
\newtheorem{theorem}{Theorem}
\newtheorem{proposition}[theorem]{Proposition}
\newtheorem{lemma}[theorem]{Lemma}
\newtheorem{corollary}[theorem]{Corollary}

\theoremstyle{definition}
\newtheorem{definition}{Definition}

\theoremstyle{remark}

\newcommand{\Vfour}{\mathbb{V}_4}
\newcommand{\truev}{\mathbf{t}}
\newcommand{\falsev}{\mathbf{f}}
\newcommand{\nonev}{\mathbf{n}}
\newcommand{\bothv}{\mathbf{b}}
\newcommand{\state}{\mathsf{state}}
\newcommand{\countop}{\mathsf{count}}

\title{A Four-Valued Normative Intermediate Representation for ASP-Oriented Compliance Reasoning}

\author{
    Huanyu Yang,\equalcontrib\textsuperscript{\rm 1}
    Yangfan Wu,\equalcontrib\textsuperscript{\rm 2}
    Jianmin Ji\textsuperscript{\rm 1}\thanks{Corresponding author.}
}

\affiliations{
    \textsuperscript{\rm 1}School of Computer Science and Technology, University of Science and Technology of China (USTC),\\
    Hefei, Anhui, China\\
    \textsuperscript{\rm 2}Department of Computer Science and Engineering, The Hong Kong University of Science and Technology (HKUST),\\
    Hong Kong SAR, China\\
    yanghuanyu@mail.ustc.edu.cn,
    yangfan.wu@connect.ust.hk,
    jianmin@ustc.edu.cn,
}
\begin{document}

\maketitle

\begin{abstract}
Technical-standard compliance reasoning may involve incomplete evidence,
inconsistent observations, exceptions, and derived normative outputs.  This
paper presents \textsc{Monir}, a four-valued normative intermediate
representation for ASP-oriented compliance workflows. \textsc{Monir}
separates factual support from normative outputs: facts are represented by
positive and negative support bits, while deontic labels are rule-generated
output constructors interpreted by aggregation policies.  Its core semantics
is a staged transition system over support configurations.  
We define the
language, rule-state interface, diagnostics, and policy-parametric reporting;
characterize admissible staged evaluation by a dependency graph; prove
deterministic polynomial-time evaluation for fixed evidence; and give NP/coNP
upper bounds, with matching hardness for representative completion-based
verdict queries.
\end{abstract}

\section{Introduction}
\label{sec:introduction}

Technical-standard compliance checking often takes place before final testing
or certification.  At that stage, design evidence may be incomplete,
partially reviewed, or inconsistent across documents, simulations, and expert
annotations.  A useful compliance representation should therefore preserve
evidence assumptions, exception structure, generated normative conclusions,
and diagnostic provenance, rather than collapse them into a single textual
judgment.

Formal approaches to compliance checking and normative reasoning have studied
conflicting and compensatory norms, deontic rule systems, and ASP-based
implementations of normative logics~\cite{robaldo2024compliance,
governatori2024asp}.  These works motivate the use of formal reasoning for
compliance tasks.  However, directly encoding a technical standard in the
backend logic can mix extraction choices, evidence status, exception
modelling, and solver-level predicate design in one artifact.  This makes
review and localization of modelling errors difficult when the source standard
or the available evidence changes.

Large language models can assist legal and regulatory extraction, but their
outputs are not authoritative compliance judgments
~\cite{guha2023legalbench,dahl2024large}.  Here LLM-assisted modelling is an
authoring route into a typed intermediate representation.  The backend
evaluates the resulting \textsc{Monir} rule base, not an LLM-generated textual
answer.

\textsc{Monir} is designed as a normative intermediate representation between
technical-standard text and executable ASP-based checking.  Its design
choice is to separate three layers.  First, factual evidence is represented by
positive and negative support, giving four readings: positive only, negative
only, no support, and conflicting support.  Second, deontic labels such as
$\mathsf O$ and $\mathsf F$ are output constructors generated by named rules;
they are not truth-valued modal operators in the factual language.  Third,
compliance-level conclusions are produced by read-only aggregation policies
over saturated support configurations, issued outputs, and diagnostics.

This separation keeps evidence conflicts, output conflicts, exemptions, and
contrary-to-duty outputs explicit, while leaving ASP as an executable target
rather than the reference semantics.

The paper makes two contributions.
\begin{enumerate}
\item It defines \textsc{Monir}, a four-valued normative intermediate
representation with explicit support, rule states, diagnostics, and
policy-parametric reporting.
\item It gives an ASP-oriented executable account, polynomial fixed-evidence
evaluation, NP/coNP bounds for completion-based verdict queries, and a
projection-correctness condition for demand slicing.
\end{enumerate}

\section{Preliminaries}
\label{sec:preliminaries}

This section fixes the background for \textsc{Monir}: four-valued support,
normative outputs, ASP execution, and structured standard modelling.

\subsection{Four-Valued Evidence and Normative Outputs}
\label{subsec:four-valued-evidence}

Belnap--Dunn four-valued semantics provides a standard account of incomplete
and inconsistent information~\cite{belnap1977useful}.  We use the value set
$\Vfour=\{\truev,\falsev,\nonev,\bothv\}$ with a support-based reading:
$\truev$ denotes positive support only, $\falsev$ negative support only,
$\nonev$ no support, and $\bothv$ both positive and negative support.

The four-valued layer is used only for factual support.  Explicit factual
negation is interpreted by reversing support polarity, as formalized in
Definition~\ref{def:literal-evaluation}.  Rule arrows are constructors, not
object-level implications in the four-valued language.  A factual rule
$n:\kappa\leadsto\ell$ adds support for a factual literal, while a normative
rule $n:\alpha\Rightarrow\omega$ generates a named normative output.  Deontic
labels in $\omega$ are output constructors rather than truth-valued modal
operators; compliance-level interpretation is delegated to aggregation
policies.

\subsection{ASP Backend and Modelling Interface}
\label{subsec:backend-target-modelling}

Answer Set Programming is the intended executable backend.  We rely on the
stable-model view of logic programs~\cite{gelfond1988stable}.
Cardinality and weight constructs are relevant because \textsc{Monir} includes
bounded support-count expressions; their ASP realization may use standard
aggregate constructs or auxiliary encodings when appropriate.  The semantics
below is nevertheless defined independently of any particular ASP encoding.

The executable realization instantiates structured \textsc{Monir} records as
ASP facts and rule templates for clingo, whose multi-shot interface supports
controlled execution~\cite{gebser2019multi}.  Clingo is an implementation
vehicle, not the source of the reference semantics.

The authoring interface is not fixed.  Structured \textsc{Monir} records may
be produced manually or by an LLM-assisted pipeline after extraction,
normalization, exception handling, and review.  The input text need not be a
controlled natural language; \textsc{Monir} fixes the restricted,
machine-checkable rule format used after modelling.

\section{\textsc{Monir}-core}
\label{sec:monir-core}

This section defines the \textsc{Monir} language, support-configuration
semantics, and admissible staged evaluation.  Stored configurations contain
support bits and diagnostics; rule states, issued outputs, and reports are
derived.

\subsection{Language and Syntactic Notation}
\label{subsec:monir-language}

Let $\mathcal P$ be a finite set of \emph{factual atoms}, for which evidence
is stored as positive or negative support.  Let $\mathcal N$ be a finite set
of \emph{rule identifiers}, used for provenance, rule-state atoms, and
exemption outputs.
The \emph{deontic output labels} are
$\mathcal M=\{\mathsf O,\mathsf F,\mathsf P,\mathsf R,\mathsf{NR}\}$.
They stand for obligation, prohibition, permission, recommendation, and
recommendation against a target.  The \emph{hard deontic labels} are
$\mathcal M_{\mathsf h}=\{\mathsf O,\mathsf F\}$.
All deontic labels are output constructors, not truth-valued modal operators.
The \emph{rule-state labels} are
\[
\begin{aligned}
\mathcal S=\{&
\mathsf{pending},\mathsf{out},\mathsf{applicable},
\mathsf{contested},\mathsf{exempted},\\
&\mathsf{issued},\mathsf{fulfilled},\mathsf{violated},
\mathsf{unknown}\}.
\end{aligned}
\]
A \emph{rule-state atom} is an expression $\state(s,n)$, where
$s\in\mathcal S$ and $n\in\mathcal N$.  Rule-state atoms are derived query
atoms.  They are not directly assigned by transitions.

\begin{definition}[\textsc{Monir} expressions]
\label{def:monir-expressions}
The expressions of \textsc{Monir} are generated by
\[
\begin{aligned}
\ell &::= p \mid \neg p,\qquad
b ::= \ell \mid \state(s,n),\\
\kappa &::= \top \mid \ell
    \mid \kappa_1\wedge\kappa_2
    \mid \countop_{[l,u]}(\ell_1,\ldots,\ell_m),\\
\alpha &::= \top \mid b
    \mid \alpha_1\wedge\alpha_2
    \mid \countop_{[l,u]}(b_1,\ldots,b_m),\\
\tau &::= \ell
    \mid \tau_1\wedge\tau_2
    \mid \countop_{[l,u]}(\ell_1,\ldots,\ell_m),\\
\omega &::= M(\tau) \mid \mathsf{exempt}(n),\\
\rho &::= n:\kappa\leadsto\ell,\qquad
r ::= n:\alpha\Rightarrow\omega .
\end{aligned}
\]
$p\in\mathcal P$, $n\in\mathcal N$, $s\in\mathcal S$, and
$M\in\mathcal M$.  Each count expression satisfies
$m\geq 1$ and $0\leq l\leq u\leq m$.  Count arguments are assumed pairwise
distinct.  All expressions are finite.  Let $\Omega$ be the set of output
expressions generated by the $\omega$ grammar.
\end{definition}

A \emph{factual condition} $\kappa$ contains only factual literals and
support-count expressions over factual literals.  A \emph{normative
condition} $\alpha$ may additionally contain rule-state atoms and
support-count expressions over body atoms.  A \emph{factual target} $\tau$ is
the factual content to which a deontic output label is attached.
For a rule $q$ of the form $n:\cdots$, let $\mathsf{id}(q)=n$.  For a rule
set $R$, write
$\mathsf{id}(R)=\{\mathsf{id}(q)\mid q\in R\}$.

\begin{definition}[Well-formed rule base]
\label{def:well-formed-rule-base}
A \textsc{Monir} \emph{well-formed rule base} is a pair
$\mathcal B=\langle R_P,R_N\rangle$, where $R_P$ is a finite set of factual
rules and $R_N$ is a finite set of normative rules.  Write
$R_{\mathcal B}=R_P\cup R_N$.  The following conditions hold:
\begin{enumerate}
\item If $q,q'\in R_{\mathcal B}$ and $q\neq q'$, then
$\mathsf{id}(q)\neq\mathsf{id}(q')$.
\item Every rule identifier occurring in a normative-body state atom or in an
exemption output belongs to $\mathsf{id}(R_N)$.
\end{enumerate}
\end{definition}

Unless stated otherwise, rule bases considered below are well formed.

For a factual rule $\rho=n:\kappa\leadsto\ell$, define
$\mathsf{body}(\rho)=\kappa$,
$\mathsf{head}(\rho)=\ell$ .
For a normative rule $r=n:\alpha\Rightarrow\omega$, define
$\mathsf{body}(r)=\alpha$, 
$\mathsf{out}(r)=\omega$.
For a well-formed rule base $\mathcal B=\langle R_P,R_N\rangle$, define
\[
\begin{aligned}
&R_{\mathsf h} =
\{r\in R_N\mid
  \mathsf{out}(r)=M(\tau)
  \text{ for some } M\in\mathcal M_{\mathsf h} \text{ and } \tau\},\\
&R_{\mathsf{ex}} =
\{r\in R_N\mid
  \mathsf{out}(r)=\mathsf{exempt}(n)
  \text{ for some } n\in\mathsf{id}(R_N)\}.
\end{aligned}
\]
For each $n\in\mathsf{id}(R_N)$, introduce fresh support carriers
$c_n$ and $e_n$.  For each $n\in\mathsf{id}(R_{\mathsf h})$, introduce a
fresh support carrier $m_n$.  These carriers are pairwise distinct and
disjoint from $\mathcal P$.
By well-formedness, each rule identifier denotes at most one rule.  If
$\mathsf{id}(q)=n$, we write $\mathsf{body}(n)$ for $\mathsf{body}(q)$ and,
when $q\in R_N$, $\mathsf{out}(n)$ for $\mathsf{out}(q)$.  We also write
$c_q,e_q,m_q$ for $c_n,e_n,m_n$, respectively, whenever the corresponding
carriers exist.
For factual literals, set
\[
\mathsf{atm}(p)=\mathsf{atm}(\neg p)=p,
\quad
\mathsf{pol}(p)=+,
\quad
\mathsf{pol}(\neg p)=-.
\]
Let $\overline{+}=-$ and $\overline{-}=+$.  Write $\ell^\circ$ for the
complementary literal, where $p^\circ=\neg p$ and $(\neg p)^\circ=p$.

Define $\mathsf{Lit}(e)$ and $\mathsf{St}(e)$ structurally as the sets of
factual literals and rule-state atoms occurring in $e$, respectively.
For an expression $e$, let $\mathsf{VLit}(e)$ be the set of factual literals
whose support may affect the value of $e$:
\[
\begin{aligned}
&\mathsf{VLit}(\top) = \emptyset,\
\mathsf{VLit}(\ell) = \{\ell,\ell^\circ\},\
\mathsf{VLit}(\state(s,n)) = \emptyset,\\
&\mathsf{VLit}(e_1\wedge e_2)
=
\mathsf{VLit}(e_1)\cup\mathsf{VLit}(e_2),\\
&\mathsf{VLit}(\countop_{[l,u]}(x_1,\ldots,x_m))
=
\bigcup_{1\leq i\leq m}\mathsf{VLit}(x_i).
\end{aligned}
\]
Let
$\mathsf{VAt}(e)=\{\mathsf{atm}(\ell)\mid \ell\in\mathsf{VLit}(e)\}$.
For an expression $e$, let $\mathsf{ULit}(e)$ be the set of factual literals
occurring inside non-vacuously upper-bounded count subexpressions:
\[
\begin{aligned}
&\mathsf{ULit}(\top) = \emptyset,\
\mathsf{ULit}(\ell) = \emptyset,
\ \mathsf{ULit}(\state(s,n)) = \emptyset,\\
&\mathsf{ULit}(e_1\wedge e_2)
=
\mathsf{ULit}(e_1)\cup\mathsf{ULit}(e_2),\\
&\mathsf{ULit}(\countop_{[l,u]}(x_1,\ldots,x_m))
=
\begin{cases}
\displaystyle\bigcup_{1\leq i\leq m}\mathsf{Lit}(x_i), & u<m,\\
\emptyset, & u=m .
\end{cases}
\end{aligned}
\]
Let
$\mathsf{UAt}(e)=\{\mathsf{atm}(\ell)\mid \ell\in\mathsf{ULit}(e)\}$.

\subsection{Support Configurations and Expression Evaluation}
\label{subsec:support-configurations-evaluation}

The core transition system operates on support configurations.  A
configuration stores support bits and diagnostic records; rule states, issued
outputs, and expression readings are derived from it.

\begin{definition}[Support configurations]
\label{def:support-configurations}
Define
\[
\begin{aligned}
&A_{\mathcal B}^{\pm}
=
\mathcal P
\cup
\{c_n\mid n\in\mathsf{id}(R_N)\}
\cup
\{m_n\mid n\in\mathsf{id}(R_{\mathsf h})\},
\\
&A_{\mathcal B}^{+}
=
\{e_n\mid n\in\mathsf{id}(R_N)\}.
\end{aligned}
\]
The support-bit vocabulary is
\[
\mathsf{Bit}(\mathcal B)
=
(\{+,-\}\times A_{\mathcal B}^{\pm})
\cup
(\{+\}\times A_{\mathcal B}^{+}).
\]
The diagnostic vocabulary is
\[
\begin{aligned}
\mathsf{Diag}(\mathcal B)
=
&\{\mathsf{conf}(a)\mid a\in A_{\mathcal B}^{\pm}\}\\
&\cup
\{\mathsf{flow}(q,a)
  \mid q\in R_{\mathcal B},\
  a\in A_{\mathcal B}^{\pm}\cup A_{\mathcal B}^{+}\}.
\end{aligned}
\]
A support configuration is a pair
$\Gamma=\langle S,D\rangle$, where
$S\subseteq\mathsf{Bit}(\mathcal B)$ and
$D\subseteq\mathsf{Diag}(\mathcal B)$.

For a well-formed bit $(\epsilon,a)$, write
$\Gamma\Vdash^\epsilon a$ iff
$(\epsilon,a)\in S$.
For $a\in A_{\mathcal B}^{\pm}$, set
$\nu_\Gamma(a)
=
\mathsf{val}(\Gamma\Vdash^+a,\Gamma\Vdash^-a)$,
where
\[
\mathsf{val}(P,N)=
\begin{cases}
\truev  & P\wedge\neg N,\\
\falsev & \neg P\wedge N,\\
\bothv  & P\wedge N,\\
\nonev  & \neg P\wedge\neg N.
\end{cases}
\]
\end{definition}

\begin{definition}[Literal evaluation]
\label{def:literal-evaluation}
For $\Gamma=\langle S,D\rangle$ and a factual literal $\ell$, define
\[
\Gamma\Vdash^\epsilon\ell
\iff
\begin{cases}
(\mathsf{pol}(\ell),\mathsf{atm}(\ell))\in S, & \epsilon=+,\\
(\overline{\mathsf{pol}(\ell)},\mathsf{atm}(\ell))\in S, & \epsilon=-.
\end{cases}
\]
Thus explicit negation reverses support polarity.
\end{definition}

\begin{definition}[Derived rule-state relation]
\label{def:derived-rule-state-relation}
For $r\in R_N$, write
$\Gamma\Vdash_O r$
iff
$\Gamma\Vdash^+c_r
\wedge
\Gamma\not\Vdash^+e_r$.
The derived rule-state relation is defined as follows.  For $r\in R_N$,
\[
\begin{aligned}
&\Gamma\Vdash_S\state(\mathsf{pending},\mathsf{id}(r))
\iff
\nu_\Gamma(c_r)=\nonev,\\
&\Gamma\Vdash_S\state(\mathsf{out},\mathsf{id}(r))
\iff
\nu_\Gamma(c_r)=\falsev,\\
&\Gamma\Vdash_S\state(\mathsf{applicable},\mathsf{id}(r))
\iff
\nu_\Gamma(c_r)=\truev,\\
&\Gamma\Vdash_S\state(\mathsf{contested},\mathsf{id}(r))
\iff
\nu_\Gamma(c_r)=\bothv,\\
&\Gamma\Vdash_S\state(\mathsf{exempted},\mathsf{id}(r))
\iff
\Gamma\Vdash^+e_r,\\
&\Gamma\Vdash_S\state(\mathsf{issued},\mathsf{id}(r))
\iff
\Gamma\Vdash_O r .
\end{aligned}
\]
For $r\in R_{\mathsf h}$, add
\[
\begin{aligned}
&\Gamma\Vdash_S\state(\mathsf{fulfilled},\mathsf{id}(r))
\iff
\Gamma\Vdash_O r\wedge\Gamma\Vdash^+m_r,\\
&\Gamma\Vdash_S\state(\mathsf{violated},\mathsf{id}(r))
\iff
\Gamma\Vdash_O r\wedge\Gamma\Vdash^-m_r,\\
&\Gamma\Vdash_S\state(\mathsf{unknown},\mathsf{id}(r))
\iff
\Gamma\Vdash_O r\wedge\nu_\Gamma(m_r)=\nonev.
\end{aligned}
\]
The relation is not functional: a contested rule may still be issued, and an
issued obligation or prohibition may be both fulfilled and violated.
\end{definition}

\begin{definition}[Counting tests]
\label{def:counting-tests}
For a sequence $\vec e=(e_1,\ldots,e_m)$, define
\[
\begin{aligned}
&P_\Gamma(\vec e)
=
|\{i\mid \Gamma\Vdash^+e_i\}|,\
N_\Gamma(\vec e)
=
|\{i\mid \Gamma\Vdash^-e_i\}|,\\
&\mathsf{pos}_{[l,u]}^\Gamma(\vec e)
\iff
l\leq P_\Gamma(\vec e)\leq u,\\
&\mathsf{neg}_{[l,u]}^\Gamma(\vec e)
\iff
P_\Gamma(\vec e)>u
\vee
m-N_\Gamma(\vec e)<l.
\end{aligned}
\]
For a body-atom sequence $\vec b=(b_1,\ldots,b_m)$, $P_\Gamma(\vec b)$ and
$N_\Gamma(\vec b)$ are defined in the same way, using
$\Gamma\Vdash_B^\pm b_i$ instead of $\Gamma\Vdash^\pm b_i$.
\end{definition}

\begin{definition}[Evaluation of factual conditions and deontic targets]
\label{def:evaluation-factual-targets}
Literal cases are given by Definition~\ref{def:literal-evaluation}.  For
$x,x_1,x_2$ ranging over factual conditions or deontic targets, define
\[
\begin{aligned}
&\Gamma\Vdash^\epsilon(x_1\wedge x_2)
\iff
\begin{cases}
\Gamma\Vdash^+x_1\wedge\Gamma\Vdash^+x_2, & \epsilon=+,\\
\Gamma\Vdash^-x_1\vee\Gamma\Vdash^-x_2, & \epsilon=-,
\end{cases}\\
&\Gamma\Vdash^\epsilon\countop_{[l,u]}(\vec\ell)
\iff
\begin{cases}
\mathsf{pos}_{[l,u]}^\Gamma(\vec\ell), & \epsilon=+,\\
\mathsf{neg}_{[l,u]}^\Gamma(\vec\ell), & \epsilon=-.
\end{cases}
\end{aligned}
\]
In addition, $\Gamma\Vdash^+\top$ holds and $\Gamma\Vdash^-\top$ does not
hold for factual conditions.
\end{definition}

\begin{definition}[Evaluation of normative bodies]
\label{def:evaluation-normative-bodies}
For body atoms, define
\[
\begin{aligned}
&\Gamma\Vdash_B^+\ell \iff \Gamma\Vdash^+\ell,
\quad
\Gamma\Vdash_B^-\ell \iff \Gamma\Vdash^-\ell,\\
&\Gamma\Vdash_B^+\state(s,n) \iff \Gamma\Vdash_S\state(s,n),
\\
&\Gamma\Vdash_B^-\state(s,n) \iff \bot .
\end{aligned}
\]
Normative bodies are evaluated by
\[
\begin{aligned}
&\Gamma\Vdash^\epsilon b
\iff
\Gamma\Vdash_B^\epsilon b,\ \ 
\Gamma\Vdash^\epsilon\top
\iff
\begin{cases}
\top, & \epsilon=+,\\
\bot, & \epsilon=-,
\end{cases}\\
&\Gamma\Vdash^\epsilon(\alpha_1\wedge\alpha_2)
\iff
\begin{cases}
\Gamma\Vdash^+\alpha_1\wedge\Gamma\Vdash^+\alpha_2, & \epsilon=+,\\
\Gamma\Vdash^-\alpha_1\vee\Gamma\Vdash^-\alpha_2, & \epsilon=-,
\end{cases}\\
&\Gamma\Vdash^\epsilon\countop_{[l,u]}(\vec b)
\iff
\begin{cases}
\mathsf{pos}_{[l,u]}^\Gamma(\vec b), & \epsilon=+,\\
\mathsf{neg}_{[l,u]}^\Gamma(\vec b), & \epsilon=-.
\end{cases}
\end{aligned}
\]
Thus state atoms contribute only positively.
\end{definition}

For every expression $e$ for which positive and negative support relations are
defined, write
$\nu_\Gamma(e)
=
\mathsf{val}(\Gamma\Vdash^+e,\Gamma\Vdash^-e)$.

\subsection{Rule-Restricted Transitions and Staged Saturation}
\label{subsec:staged-support-semantics}

The transition rules are parameterized by a rule set
$Q\subseteq R_{\mathcal B}$.  This gives a local transition operator for one
stratum.  A ranking of rules induces staged saturation by applying local
saturation stratum by stratum.

\begin{definition}[Rule-restricted support additions]
\label{def:restricted-support-additions}
Let $Q\subseteq R_{\mathcal B}$, $\Gamma=\langle S,D\rangle$, and let
$\epsilon$ range over $\{+,-\}$.  The set $\Delta_S^Q(\Gamma)$ is the least
set satisfying the following clauses.
\begin{enumerate}
\item For $\rho\in Q\cap R_P$,
\[
\hspace{-0.8em}
\begin{aligned}
\Gamma\Vdash^+\mathsf{body}(\rho)
&\Rightarrow
(\mathsf{pol}(\mathsf{head}(\rho)),
 \mathsf{atm}(\mathsf{head}(\rho)))
\in\Delta_S^Q(\Gamma),\\
\Gamma\Vdash^-\mathsf{body}(\rho)
&\Rightarrow
(\overline{\mathsf{pol}(\mathsf{head}(\rho))},
 \mathsf{atm}(\mathsf{head}(\rho)))
\in\Delta_S^Q(\Gamma).
\end{aligned}
\]

\item For $r\in Q\cap R_N$,
\[
\Gamma\Vdash^\epsilon\mathsf{body}(r)
\Rightarrow
(\epsilon,c_r)\in\Delta_S^Q(\Gamma).
\]

\item For $r\in Q\cap R_{\mathsf{ex}}$ with
$\mathsf{out}(r)=\mathsf{exempt}(k)$,
\[
\Gamma\Vdash_O r
\Rightarrow
(+,e_k)\in\Delta_S^Q(\Gamma).
\]
\item For $r\in Q\cap R_{\mathsf h}$ with
$\mathsf{out}(r)=\mathsf O(\tau)$,
\[
\Gamma\Vdash_O r
\wedge
\Gamma\Vdash^\epsilon\tau
\Rightarrow
(\epsilon,m_r)\in\Delta_S^Q(\Gamma).
\]

\item For $r\in Q\cap R_{\mathsf h}$ with
$\mathsf{out}(r)=\mathsf F(\tau)$,
\[
\Gamma\Vdash_O r
\wedge
\Gamma\Vdash^\epsilon\tau
\Rightarrow
(\overline{\epsilon},m_r)\in\Delta_S^Q(\Gamma).
\]
\end{enumerate}
\end{definition}

\begin{definition}[Rule-restricted diagnostic additions]
\label{def:restricted-diagnostic-additions}
Let $Q\subseteq R_{\mathcal B}$, $\Gamma=\langle S,D\rangle$, let
$\epsilon$ range over $\{+,-\}$, and put
$S^\star=S\cup\Delta_S^Q(\Gamma)$.  The set $\Delta_D^Q(\Gamma)$ is the
least set satisfying the following clauses.
\begin{enumerate}
\item For $a\in A_{\mathcal B}^{\pm}$,
\[
(+ ,a)\in S^\star
\wedge
(- ,a)\in S^\star
\Rightarrow
\mathsf{conf}(a)\in\Delta_D^Q(\Gamma).
\]

\item For $\rho\in Q\cap R_P$ with
$\nu_\Gamma(\mathsf{body}(\rho))=\bothv$,
\[
\hspace{-3em}
(\epsilon,\mathsf{atm}(\mathsf{head}(\rho)))
\in\Delta_S^Q(\Gamma)
\Rightarrow
\mathsf{flow}(\rho,\mathsf{atm}(\mathsf{head}(\rho)))
\in\Delta_D^Q(\Gamma).
\]

\item For $r\in Q\cap R_N$ with
$\nu_\Gamma(\mathsf{body}(r))=\bothv$,
\[
(\epsilon,c_r)\in\Delta_S^Q(\Gamma)
\Rightarrow
\mathsf{flow}(r,c_r)\in\Delta_D^Q(\Gamma).
\]

\item For $r\in Q\cap R_{\mathsf{ex}}$ with
$\mathsf{out}(r)=\mathsf{exempt}(k)$ and
$\nu_\Gamma(\mathsf{body}(r))=\bothv$,
\[
(+,e_k)\in\Delta_S^Q(\Gamma)
\Rightarrow
\mathsf{flow}(r,e_k)\in\Delta_D^Q(\Gamma).
\]

\item For $r\in Q\cap R_{\mathsf h}$ with
$\mathsf{out}(r)=M(\tau)$ and $\nu_\Gamma(\tau)=\bothv$,
\[
(\epsilon,m_r)\in\Delta_S^Q(\Gamma)
\Rightarrow
\mathsf{flow}(r,m_r)\in\Delta_D^Q(\Gamma).
\]
\end{enumerate}
Support-level conflict detection is global; flow diagnostics are local to
rules in $Q$.
\end{definition}

\begin{definition}[Local transition and local saturation]
\label{def:local-transition-saturation}
For $Q\subseteq R_{\mathcal B}$ and configurations
$\Gamma=\langle S,D\rangle$ and $\Gamma'=\langle S',D'\rangle$,
\[
\Gamma\to_Q\Gamma'
\iff
S'=S\cup\Delta_S^Q(\Gamma)
\wedge
D'=D\cup\Delta_D^Q(\Gamma).
\]
The transition is \emph{strict} iff $\Gamma\neq\Gamma'$.  A configuration
$\Gamma$ is \emph{$Q$-saturated} iff $\Gamma\to_Q\Gamma'$ implies
$\Gamma'=\Gamma$.  When it exists, the unique $Q$-saturated configuration
reachable from $\Gamma$ is denoted by $\mathsf{Sat}_Q(\Gamma)$.
\end{definition}

\begin{definition}[$\lambda$-staged saturation]
\label{def:lambda-staged-saturation}
Let $\lambda:R_{\mathcal B}\to\mathbb N$ be a ranking of the rules of
$\mathcal B$, and set
$R_\lambda^k=\{q\in R_{\mathcal B}\mid \lambda(q)=k\}$.
Let $k_1<\cdots<k_m$ be the indices of the nonempty strata, where
$m$ may be $0$.  Set
\[
\Gamma^0=\Gamma_0,
\qquad
\Gamma^i=\mathsf{Sat}_{R_\lambda^{k_i}}(\Gamma^{i-1})
\quad (1\leq i\leq m).
\]
The resulting configuration is
$\Gamma^{\lambda\ast}_{\mathcal B,\Gamma_0}=\Gamma^m$.
Initial configurations are assumed conflict-closed.
\end{definition}

\subsection{Admissibility and Stable-Read Commitment}
\label{subsec:admissibility-stable-read}

\textsc{Monir} stores support bits monotonically, but some observations used
by rule bodies are not persistent under configuration extension.  A rule may
therefore commit persistent support only after the critical observations on
which the commitment depends have stabilized in lower strata.

For $n\in\mathsf{id}(R_N)$, define the observable state signature of $n$ by
\[
\mathsf{Obs}(n)
=
\{c_n,e_n\}
\cup
\begin{cases}
\{m_n\}, & n\in\mathsf{id}(R_{\mathsf h}),\\
\emptyset, & \text{otherwise.}
\end{cases}
\]
These are the interface atoms whose later addition may affect state tests
concerning $n$.

For each rule $q\in R_{\mathcal B}$, define
\[
\mathsf{Out}(q)=
\begin{cases}
\{\mathsf{atm}(\mathsf{head}(q))\},
& q\in R_P,\\
\{c_q,e_k\},
& q\in R_{\mathsf{ex}},\ \mathsf{out}(q)=\mathsf{exempt}(k),\\
\{c_q,m_q\},
& q\in R_{\mathsf h},\\
\{c_q\},
& q\in R_N\setminus(R_{\mathsf h}\cup R_{\mathsf{ex}}).
\end{cases}
\]

The input signature of a rule is split into a monotone part and a critical
part.  For a factual rule $\rho\in R_P$, let
\[
\mathsf{In}^{\mathsf{mon}}(\rho)
=
\mathsf{VAt}(\mathsf{body}(\rho)),
\qquad
\mathsf{In}^{\mathsf{crit}}(\rho)
=
\mathsf{UAt}(\mathsf{body}(\rho)).
\]
Upper-bounded count expressions are treated as critical because adding support
may invalidate their positive reading.  Counts without a non-vacuous upper
bound are monotone in positive support.
For a normative rule $r\in R_N$, let
\[
\begin{aligned}
\mathsf{In}^{\mathsf{mon}}(r)
=
&\mathsf{VAt}(\mathsf{body}(r))\\
&\cup
\begin{cases}
\mathsf{VAt}(\tau),
& \mathsf{out}(r)=M(\tau),M\in\mathcal M_{\mathsf h},\\
\emptyset,
& \text{otherwise,}
\end{cases}\\
\mathsf{In}^{\mathsf{crit}}(r)
=
&\mathsf{UAt}(\mathsf{body}(r))
\cup
\{e_r\}\\
&\cup
\begin{cases}
\mathsf{UAt}(\tau),
& \mathsf{out}(r)=M(\tau), M\in\mathcal M_{\mathsf h},\\
\emptyset,
& \text{otherwise,}
\end{cases}\\
&\cup
\bigcup_{\state(s,n)\in\mathsf{St}(\mathsf{body}(r))}
\mathsf{Obs}(n).
\end{aligned}
\]
The carrier $e_r$ is critical because exemption support may invalidate the
issued reading of $r$.  State tests are treated conservatively: every state
label of rule $n$ is read through the same observable signature
$\mathsf{Obs}(n)$.

\begin{definition}[Evaluation dependency graph]
\label{def:evaluation-dependency-graph}
The evaluation dependency graph is the edge-labelled graph
$G_{\mathcal B}^{\mathsf{dep}}=(R_{\mathcal B},E,w)$,
where $(q',q)\in E$ iff
\[
\mathsf{Out}(q')
\cap
\bigl(
\mathsf{In}^{\mathsf{mon}}(q)
\cup
\mathsf{In}^{\mathsf{crit}}(q)
\bigr)
\neq\emptyset .
\]
The edge label $w(q',q)\in\{0,1\}$ is defined by
\[
w(q',q)=1
\iff
\mathsf{Out}(q')\cap\mathsf{In}^{\mathsf{crit}}(q)\neq\emptyset,
\]
and $w(q',q)=0$ otherwise.  Edges of weight $0$ are monotone dependencies.
Edges of weight $1$ are critical dependencies.
\end{definition}

\begin{definition}[Admissibility witness]
\label{def:admissibility-witness}
Let $G_{\mathcal B}^{\mathsf{dep}}=(R_{\mathcal B},E,w)$.  A ranking
$\lambda:R_{\mathcal B}\to\mathbb N$ is an \emph{admissibility witness} iff,
for every $(q',q)\in E$,
$\lambda(q')+w(q',q)\leq \lambda(q)$.
A rule base is \emph{admissible} iff it has an admissibility witness.
\end{definition}

\begin{lemma}[Closure of critical inputs]
\label{lem:critical-input-closure}
Let $\lambda$ be an admissibility witness for $\mathcal B$.  If
$\mathsf{Out}(q')\cap\mathsf{In}^{\mathsf{crit}}(q)\neq\emptyset$ ,
then $\lambda(q')<\lambda(q)$.  Hence, after all lower strata of $q$ have
been saturated, no rule in the current or a later stratum can add a carrier
in $\mathsf{In}^{\mathsf{crit}}(q)$.
\end{lemma}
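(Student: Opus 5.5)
The first claim follows directly from the definitions, and the second is a contrapositive argument over strata.

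\emph{First claim.} Suppose $\mathsf{Out}(q')\cap\mathsf{In}^{\mathsf{crit}}(q)\neq\emptyset$. This set is contained in $\mathsf{Out}(q')\cap(\mathsf{In}^{\mathsf{mon}}(q)\cup\mathsf{In}^{\mathsf{crit}}(q))$, so that set is nonempty too. By Definition~\ref{def:evaluation-dependency-graph}, $(q',q)\in E$ and $w(q',q)=1$. Definition~\ref{def:admissibility-witness} then gives $\lambda(q')+1\leq\lambda(q)$, that is, $\lambda(q')<\lambda(q)$.

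\emph{Second claim.} Fix $q$ and a carrier $a\in\mathsf{In}^{\mathsf{crit}}(q)$. Let $q''$ be any rule with $\lambda(q'')\geq\lambda(q)$, i.e.\ a rule in the current or a later stratum. By the first claim in contrapositive form, $\mathsf{Out}(q'')\cap\mathsf{In}^{\mathsf{crit}}(q)=\emptyset$, so $a\notin\mathsf{Out}(q'')$.

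It remains to check that $\mathsf{Out}(q'')$ really bounds every support bit $q''$ can contribute. We go through the clauses of Definition~\ref{def:restricted-support-additions}:
\begin{itemize}
\item Clause~1 adds bits on $\mathsf{atm}(\mathsf{head}(\rho))$.
\item Clause~2 adds bits on $c_r$.
\item Clause~3 adds $(+,e_k)$ for $\mathsf{out}(r)=\mathsf{exempt}(k)$.
\item Clauses~4--5 add bits on $m_r$.
\end{itemize}
Each of these carriers lies in the corresponding case of $\mathsf{Out}$. The only point needing care is the third case of $\mathsf{Out}$: a rule in $R_{\mathsf h}$ could not also be in $R_{\mathsf{ex}}$, and in fact $R_{\mathsf h}\cap R_{\mathsf{ex}}=\emptyset$ because the output is either $M(\tau)$ or $\mathsf{exempt}(k)$. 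Hence $\Delta_S^Q(\Gamma)$ only contains bits on atoms in $\bigcup_{q''\in Q}\mathsf{Out}(q'')$, for any $Q$ and $\Gamma$.

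\emph{Stratum argument.} In $\lambda$-staged saturation (Definition~\ref{def:lambda-staged-saturation}), the steps after the lower strata of $q$ are saturated are $\to_Q$ transitions with $Q=R_\lambda^{k_j}$ and $k_j\geq\lambda(q)$. Every rule in such a $Q$ satisfies $\lambda(q'')\geq\lambda(q)$. So none of these transitions adds a bit $(\epsilon,a)$ with $a\in\mathsf{In}^{\mathsf{crit}}(q)$. The positive and negative support of each critical carrier is therefore frozen from that point on. Diagnostics are irrelevant here, since they do not affect support bits.

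The main (mild) obstacle is this exhaustive check that $\mathsf{Out}$ covers all support additions. In particular, Clause~3 writes into $e_k$ for a rule $k$ other than $q$; this is why $e_k$, and not $e_q$, is listed in $\mathsf{Out}(q)$ for $q\in R_{\mathsf{ex}}$.
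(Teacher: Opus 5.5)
Your proposal is correct and follows the paper's proof. The weight-$1$ edge $(q',q)$ together with the witness inequality gives $\lambda(q')<\lambda(q)$, and the second claim is the contrapositive applied to every stratum of index at least $\lambda(q)$; your clause-by-clause check that $\Delta_S^Q(\Gamma)$ only touches carriers in $\mathsf{Out}$ spells out what the paper compresses into ``follows from the definition of $\mathsf{Out}$ and $G_{\mathcal B}^{\mathsf{dep}}$.''
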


\begin{proof}
The intersection gives an edge $(q',q)$ of weight $1$.  By admissibility,
$\lambda(q')+1\leq\lambda(q)$.
Thus $\lambda(q')<\lambda(q)$.  The final claim follows from the definition
of $\mathsf{Out}$ and $G_{\mathcal B}^{\mathsf{dep}}$.
\end{proof}

\begin{proposition}[Graph characterization of admissibility]
\label{prop:admissibility-graph-characterization}
A rule base is admissible iff no directed cycle of
$G_{\mathcal B}^{\mathsf{dep}}$ contains an edge of weight $1$.
\end{proposition}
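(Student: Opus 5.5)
We prove the two directions separately. The argument uses only the finiteness of $R_{\mathcal B}$ and the constraint $\lambda(q')+w(q',q)\leq\lambda(q)$ from Definition~\ref{def:admissibility-witness}. It is the standard characterization of feasibility for difference constraints with nonnegative weights, specialised to weights in $\{0,1\}$.

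\emph{Necessity.} Suppose $\lambda$ is an admissibility witness, and suppose a directed cycle $q_0\to q_1\to\cdots\to q_k=q_0$ of $G_{\mathcal B}^{\mathsf{dep}}$ contains an edge of weight $1$. Summing the inequalities $\lambda(q_i)+w(q_i,q_{i+1})\leq\lambda(q_{i+1})$ around the cycle, the $\lambda$-terms telescope and we get $\sum_i w(q_i,q_{i+1})\leq 0$. Every weight is nonnegative and at least one equals $1$, so the sum is at least $1$, a contradiction. Equivalently, following the cycle $\lambda$ never decreases and strictly increases at the weight-$1$ edge, so $\lambda(q_0)<\lambda(q_0)$.

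\emph{Sufficiency.} Assume no cycle contains a weight-$1$ edge, and construct $\lambda$ in two steps.
\begin{enumerate}
\item Form the condensation of $G_{\mathcal B}^{\mathsf{dep}}$ into strongly connected components. Every edge inside a component lies on a directed cycle (go from its head back to its tail within the component). By hypothesis, therefore, every intra-component edge has weight $0$.
\item The condensation $C$ is a finite DAG. For each rule $q$, define $\lambda(q)$ as the maximum, over all directed paths in $C$ ending at the component $[q]$, of the number of weight-$1$ edges of $G_{\mathcal B}^{\mathsf{dep}}$ crossing between consecutive components along the path. Here each step between components is assigned the largest weight of any edge realising it. This maximum is finite because $C$ is acyclic and finite.
\end{enumerate}

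Now check the constraint for each edge $(q',q)$. If $q'$ and $q$ lie in the same component, then $w(q',q)=0$ and $\lambda(q')=\lambda(q)$. Otherwise, any path in $C$ ending at $[q']$ extends by the step $[q']\to[q]$, whose assigned weight is at least $w(q',q)$. Hence $\lambda(q)\geq\lambda(q')+w(q',q)$. Since $\lambda$ is $\mathbb N$-valued, it is an admissibility witness.

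The one point needing care is the observation in step 1 that intra-component edges lie on cycles. It is what forces weight $0$ inside components and makes the constant value of $\lambda$ on each component consistent. This is routine: if $q'$ and $q$ are strongly connected, the edge $(q',q)$ followed by a path from $q$ back to $q'$ is a cycle. Self-loops $(q,q)$ are covered by the same argument, since a self-loop is itself a cycle and must then have weight $0$. This is consistent with the required inequality $\lambda(q)+w\leq\lambda(q)$.
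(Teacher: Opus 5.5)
Your proof is correct and follows essentially the same argument as the paper. Necessity uses the fact that ranks are nondecreasing along edges and strictly increasing across weight-$1$ edges, which is your telescoping sum. Sufficiency contracts the SCCs (which contain only weight-$0$ edges) and ranks each component by the maximum path weight ending at it in the condensation DAG.
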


\begin{proof}
If an admissibility witness exists, ranks are nondecreasing along every edge
and strictly increase along every weight-$1$ edge.  Hence no directed cycle
contains a weight-$1$ edge.
Conversely, suppose no directed cycle contains a weight-$1$ edge.  Then every
strongly connected component contains only weight-$0$ edges.  Contract the
strongly connected components.  The resulting graph is acyclic.  Assign to
each component the maximum weight of a path ending at it, and give each rule
the rank of its component.  This ranking satisfies the witness condition.
\end{proof}

\begin{definition}[Least-barrier witness]
\label{def:least-barrier-witness}
Let $\mathcal B$ be admissible.  For a directed path
$\pi=q_0\to q_1\to\cdots\to q_m$, including the length-zero path $q_0$, let $w(\pi)=\sum_{i=1}^{m} w(q_{i-1},q_i)$,
\[
\lambda_{\min}(q)
=
\max\{w(\pi)\mid \pi \text{ is a directed path ending at } q\}.
\]
The maximum is finite by admissibility; call $\lambda_{\min}$ the
\emph{least-barrier witness}.
\end{definition}

\begin{proposition}[Minimality of the least-barrier witness]
\label{prop:least-barrier-witness}
If $\mathcal B$ is admissible, then $\lambda_{\min}$ is an admissibility
witness.  Moreover, for every admissibility witness $\lambda$ and every
$q\in R_{\mathcal B}$,
$\lambda_{\min}(q)\leq \lambda(q)$.
\end{proposition}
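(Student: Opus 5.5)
The plan is to prove the two claims separately. Both rest on one observation: along any directed path $\pi=q_0\to\cdots\to q_m$ in $G_{\mathcal B}^{\mathsf{dep}}$, every admissibility witness $\lambda$ satisfies $\lambda(q_m)\geq\lambda(q_0)+w(\pi)$. This follows by induction on $m$, chaining the edge inequality $\lambda(q_{i-1})+w(q_{i-1},q_i)\leq\lambda(q_i)$ from Definition~\ref{def:admissibility-witness}.

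\textbf{Finiteness and well-definedness.} First I confirm that $\lambda_{\min}(q)$ is a well-defined natural number. The length-zero path gives the lower bound $\lambda_{\min}(q)\geq 0$. For the upper bound, note that by Proposition~\ref{prop:admissibility-graph-characterization} no cycle contains a weight-$1$ edge. Hence any path, however long, visits at most as many weight-$1$ edges as there are edges between distinct strongly connected components. Passing to the condensation DAG, these edges number at most $|R_{\mathcal B}|-1$. So $w(\pi)$ is bounded over all paths ending at $q$, and the maximum is attained. The set of paths is infinite when cycles exist, but the set of achievable weights is a finite set of naturals.

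\textbf{$\lambda_{\min}$ is a witness.} Take an edge $(q',q)\in E$ and any path $\pi$ ending at $q'$. Appending the edge gives a path $\pi'$ ending at $q$ with $w(\pi')=w(\pi)+w(q',q)$. Choosing $\pi$ to realize $\lambda_{\min}(q')$ yields $\lambda_{\min}(q)\geq\lambda_{\min}(q')+w(q',q)$, which is exactly the witness condition. The codomain is $\mathbb N$ by the finiteness step.

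\textbf{Minimality.} Let $\lambda$ be any admissibility witness and $q\in R_{\mathcal B}$. Choose a path $\pi$ ending at $q$ with $w(\pi)=\lambda_{\min}(q)$, starting at some $q_0$. The path inequality gives $\lambda(q)\geq\lambda(q_0)+w(\pi)\geq w(\pi)=\lambda_{\min}(q)$, using $\lambda(q_0)\geq 0$.

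The only nonroutine point is the finiteness argument. Paths may be non-simple, so I rely on the SCC condensation from the proof of Proposition~\ref{prop:admissibility-graph-characterization}, which guarantees that every weight-$1$ edge on any path strictly advances in the condensation's topological order.
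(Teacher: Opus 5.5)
Your proof is correct and follows the paper's argument: appending an edge to a maximizing path ending at $q'$ gives the witness inequality, and chaining the witness inequalities along a path ending at $q$ gives $\lambda(q)\geq w(\pi)$, hence $\lambda(q)\geq\lambda_{\min}(q)$. Your explicit finiteness step via the SCC condensation fills in something the paper only asserts in the definition. Finiteness also follows directly from your path inequality applied to arbitrary paths: any witness $\lambda$, which exists by admissibility, bounds every $w(\pi)$ by $\lambda(q)$.
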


\begin{proof}
For any edge $(q',q)$, extending every path ending at $q'$ by this edge gives
$\lambda_{\min}(q)\geq\lambda_{\min}(q')+w(q',q)$.  Thus
$\lambda_{\min}$ is a witness.  For any witness $\lambda$, summing the witness
inequalities along a path $\pi$ ending at $q$ yields
$\lambda(q)\geq w(\pi)$.  Maximizing over such paths gives
$\lambda(q)\geq\lambda_{\min}(q)$.
\end{proof}

Let $\mathcal B$ be admissible and let $\lambda_{\min}$ be its least-barrier
witness.  The default \textsc{Monir}-core saturated result from $\Gamma_0$ is
$\Gamma^{\mathsf{st}}_{\mathcal B,\Gamma_0}
=
\Gamma^{\lambda_{\min}\ast}_{\mathcal B,\Gamma_0}$.
Unless explicitly stated otherwise, \emph{saturation} in \textsc{Monir}-core means default staged saturation.

\subsection{Policy-Parametric Aggregation}
\label{subsec:policy-parametric-aggregation}

The staged transition semantics stops at saturated support configurations.
It derives support bits, diagnostics, rule states, and issued outputs, but
leaves compliance-level reporting to read-only aggregation policies.
Aggregation does not change support configurations, admissibility witnesses,
or staged saturation.

An \emph{output item} is a pair in $\mathcal N\times\Omega$.  A
\emph{report item} is a triple $(k,n,z)$, where $k$ is a report label,
$n\in\mathcal N$, and $z$ is an output or report payload.
For a support configuration $\Gamma$, define the issued outputs
\[
\mathsf{Issued}_{\mathcal B}(\Gamma)
=
\{(\mathsf{id}(r),\mathsf{out}(r))
  \mid
  r\in R_N,\ \Gamma\Vdash_O r\}.
\]

\begin{definition}[Aggregation policy]
\label{def:aggregation-policy}
An \emph{aggregation policy} is a tuple
$\Pi=
\langle
\mathsf{class}_{\Pi},
\mathsf{outconf}_{\Pi},
\mathsf{verdict}_{\Pi}
\rangle$ .
For each finite rule base $\mathcal B$,
$\mathsf{class}_{\Pi}^{\mathcal B}$ maps saturated configurations to finite
sets of report items.  The relation
$\mathsf{outconf}_{\Pi}^{\mathcal B}$ is a finite relation of triples
$(h,x,y)$, where $h$ is an output-conflict label and
\[
x,y\in
\{(\mathsf{id}(r),\mathsf{out}(r))\mid r\in R_N\}.
\]
The function $\mathsf{verdict}_{\Pi}^{\mathcal B}$ maps finite sets of report
items, output conflicts, and transition diagnostics to finite sets of verdict
labels.
A policy is \emph{polynomial-time} if membership in
$\mathsf{class}_{\Pi}^{\mathcal B}(\Gamma)$, membership in
$\mathsf{outconf}_{\Pi}^{\mathcal B}$, and membership in
$\mathsf{verdict}_{\Pi}^{\mathcal B}(A,C,D)$ are computable in polynomial time
in the explicit size of $\mathcal B$ and $\Gamma$.

For a saturated configuration $\Gamma=\langle S,D\rangle$, let
$A=\mathsf{class}_{\Pi}^{\mathcal B}(\Gamma)$
and
\[
C=
\{(h,x,y)\in\mathsf{outconf}_{\Pi}^{\mathcal B}
  \mid
  x,y\in\mathsf{Issued}_{\mathcal B}(\Gamma),\
  x\neq y\}.
\]
The report induced by $\Pi$ is
\[
\mathsf{Rep}_{\Pi}^{\mathcal B}(\Gamma)
=
\langle A,C,\mathsf{verdict}_{\Pi}^{\mathcal B}(A,C,D)\rangle .
\]
Here $D$ records transition diagnostics, while $C$ records conflicts between
issued outputs under $\Pi$.
\end{definition}

\begin{definition}[Reference diagnostic policy]
\label{def:reference-diagnostic-policy}
The \emph{reference diagnostic policy}
$\Pi_0=
\langle
\mathsf{class}_{0},
\mathsf{outconf}_{0},
\mathsf{verdict}_{0}
\rangle$
uses report labels
$\mathsf{viol}$, $\mathsf{unres}$, $\mathsf{perm}$, $\mathsf{rec}$,
$\mathsf{contest}$, and $\mathsf{ctd}$; conflict labels
$\mathsf{hard}$, $\mathsf{soft}$, and $\mathsf{warn}$; and verdict labels
$\mathsf{compliant}$, $\mathsf{noncompliant}$, $\mathsf{undetermined}$,
$\mathsf{conflicted}$, and $\mathsf{warning}$.

For a staged saturated configuration $\Gamma$,
$\mathsf{class}_{0}^{\mathcal B}(\Gamma)$ is generated by:
\[
\small
\begin{array}{@{}l|l@{}}
\text{item} & \text{condition}\\
\hline
(\mathsf{viol},n,M(\tau))
&
\begin{array}[t]{@{}l@{}}
(n,M(\tau))\in\mathsf{Issued}_{\mathcal B}(\Gamma)\wedge M\in\mathcal M_{\mathsf h}\\
{}\wedge \Gamma\Vdash_S\state(\mathsf{violated},n)
\end{array}
\\[0.45em]

(\mathsf{unres},n,M(\tau))
&
\begin{array}[t]{@{}l@{}}
(n,M(\tau))\in\mathsf{Issued}_{\mathcal B}(\Gamma)\wedge M\in\mathcal M_{\mathsf h}\\
{}\wedge \Gamma\Vdash_S\state(\mathsf{unknown},n)
\end{array}
\\[0.45em]

(\mathsf{unres},n,\mathsf{pending})
&
\begin{array}[t]{@{}l@{}}
\Gamma\Vdash_S\state(\mathsf{pending},n)\\
{}\wedge \neg\Gamma\Vdash_S\state(\mathsf{exempted},n)
\end{array}
\\[0.45em]

(\mathsf{contest},n,\mathsf{contested})
&
\begin{array}[t]{@{}l@{}}
\Gamma\Vdash_S\state(\mathsf{contested},n)
\end{array}
\\[0.45em]

(\mathsf{perm},n,\mathsf P(\tau))
&
\begin{array}[t]{@{}l@{}}
(n,\mathsf P(\tau))\in\mathsf{Issued}_{\mathcal B}(\Gamma)
\end{array}
\\[0.45em]

(\mathsf{rec},n,M(\tau))
&
\begin{array}[t]{@{}l@{}}
(n,M(\tau))\in\mathsf{Issued}_{\mathcal B}(\Gamma)\wedge M\in\{\mathsf R,\mathsf{NR}\}
\end{array}
\\[0.45em]

(\mathsf{ctd},n,\omega)
&
\begin{array}[t]{@{}l@{}}
(n,\omega)\in\mathsf{Issued}_{\mathcal B}(\Gamma)\\
{}\wedge \exists m\,
\bigl(\state(\mathsf{violated},m)\in\mathsf{St}(\mathsf{body}(n))\bigr)
\end{array}
\end{array}
\]

The relation $\mathsf{outconf}_{0}^{\mathcal B}$ is the least symmetric
labelled relation generated by the following rows, for rule identifiers
$n,m$ and factual targets $\tau$, whenever both displayed output items occur
in $\{(\mathsf{id}(r),\mathsf{out}(r))\mid r\in R_N\}$:
\[
\small
\begin{array}{l|l|l}
\text{label} & \text{first output} & \text{second output}\\
\hline
\mathsf{hard} & (n,\mathsf O(\tau)) & (m,\mathsf F(\tau))\\
\mathsf{hard} & (n,\mathsf P(\tau)) & (m,\mathsf F(\tau))\\
\mathsf{soft} & (n,\mathsf R(\tau)) & (m,\mathsf{NR}(\tau))\\
\mathsf{warn} & (n,\mathsf O(\tau)) & (m,\mathsf{NR}(\tau))\\
\mathsf{warn} & (n,\mathsf F(\tau)) & (m,\mathsf R(\tau))\\
\mathsf{warn} & (n,\mathsf P(\tau)) & (m,\mathsf{NR}(\tau)).
\end{array}
\]

For a set $A$ of report items and a set $C$ of detected output conflicts,
write
\[
A_k=\{(n,z)\mid (k,n,z)\in A\},
\
C_h=\{(x,y)\mid (h,x,y)\in C\}.
\]
The verdict function $\mathsf{verdict}_{0}^{\mathcal B}(A,C,D)$ returns each
verdict whose trigger condition holds:
\[
\small
\begin{array}{l|l}
\text{verdict} & \text{trigger}\\
\hline
\mathsf{noncompliant}
&
A_{\mathsf{viol}}\neq\emptyset
\\[0.4em]

\mathsf{undetermined}
&
A_{\mathsf{unres}}\neq\emptyset
\\[0.4em]

\mathsf{conflicted}
&
\begin{array}{@{}l@{}}
C_{\mathsf{hard}}\neq\emptyset
\vee \exists a\,\mathsf{conf}(a)\in D \vee \\
{}A_{\mathsf{contest}}\neq\emptyset
\vee \exists q,a\,\mathsf{flow}(q,a)\in D
\end{array}
\\[0.4em]

\mathsf{warning}
&
C_{\mathsf{soft}}\neq\emptyset
\vee
C_{\mathsf{warn}}\neq\emptyset
\\[0.4em]

\mathsf{compliant}
&
\begin{array}{@{}l@{}}
A_{\mathsf{viol}}\cup A_{\mathsf{unres}}\cup A_{\mathsf{contest}}=\emptyset
\\
{}\wedge C_{\mathsf{hard}}=\emptyset
\wedge\nexists a\,\mathsf{conf}(a)\in D
\\
{}\wedge\nexists q,a\,\mathsf{flow}(q,a)\in D
\end{array}
\end{array}
\]
The verdict is diagnostic: several verdict labels may be returned for the same
staged saturated configuration.
\end{definition}

Contrary-to-duty outputs are marked by $\mathsf{ctd}$ but do not discharge the
primary violation.  Likewise, transition diagnostics are explanatory rather
than retracting: conflict and flow records account for contested or
conflict-dependent outputs without cancelling issued outputs.

\section{Semantic Properties}
\label{sec:semantic-properties}

This section records persistence properties, deterministic staged evaluation,
and the complexity of fixed-evidence and completion-based compliance queries.

\subsection{Persistent Storage}
\label{subsec:persistent-storage}

For configurations
$\Gamma=\langle S,D\rangle$ and $\Gamma'=\langle S',D'\rangle$, write
$\Gamma\sqsubseteq\Gamma'$ iff
$S\subseteq S'\wedge D\subseteq D'$.
By Definition~\ref{def:local-transition-saturation}, every local transition
extends both support and diagnostic sets.  Thus stored information is
monotone under $\sqsubseteq$.

Storage monotonicity concerns stored support and diagnostics.  It does not
make derived observations persistent: expression readings, rule states,
issued outputs, and report labels are computed from the current configuration.

\begin{proposition}[Non-persistence of derived observations]
\label{prop:derived-nonpersistence}
The following failures occur under configuration extension.
\begin{enumerate}
\item There are $\Gamma\sqsubseteq\Gamma'$ and a factual condition $\kappa$
such that $\Gamma\Vdash^+\kappa$ but $\Gamma'\not\Vdash^+\kappa$.
\item There are $\Gamma\sqsubseteq\Gamma'$ and a normative rule $r\in R_N$
such that $\Gamma\Vdash_O r$ but $\Gamma'\not\Vdash_O r$.
\end{enumerate}
\end{proposition}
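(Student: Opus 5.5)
The proof is by explicit counterexample for each item: in each case we build a small configuration and extend it by a single support bit. No earlier result is needed beyond the definitions of literal evaluation, counting tests, and the derived relation $\Vdash_O$.

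For item 1, take a factual atom $p\in\mathcal P$ and the factual condition
\[
\kappa=\countop_{[0,0]}(p),
\]
which is well formed since $m=1$ and $0\le l\le u\le m$. Let $\Gamma=\langle\emptyset,\emptyset\rangle$ and $\Gamma'=\langle\{(+,p)\},\emptyset\rangle$, so $\Gamma\sqsubseteq\Gamma'$.

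In $\Gamma$ we have $P_\Gamma(p)=0$, so $\mathsf{pos}^\Gamma_{[0,0]}(p)$ holds and $\Gamma\Vdash^+\kappa$. In $\Gamma'$, Definition~\ref{def:literal-evaluation} gives $\Gamma'\Vdash^+p$, so $P_{\Gamma'}(p)=1>0$. Hence $\mathsf{pos}^{\Gamma'}_{[0,0]}(p)$ fails and $\Gamma'\not\Vdash^+\kappa$.

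Any upper-bounded count with $u<m$ works the same way, which is exactly why such literals are classified as critical via $\mathsf{UAt}$. If one wants to avoid $l=0$, use $\countop_{[1,1]}(p,q)$ and pass from $\{(+,p)\}$ to $\{(+,p),(+,q)\}$.

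For item 2, take a rule base with a single normative rule $r=n:\top\Rightarrow\mathsf P(p)$. This is well formed: it has a unique identifier and contains no state atoms or exemption outputs. Since $n\in\mathsf{id}(R_N)$, the carriers $c_n$ and $e_n$ exist. Let
\[
\Gamma=\langle\{(+,c_n)\},\emptyset\rangle,
\qquad
\Gamma'=\langle\{(+,c_n),(+,e_n)\},\emptyset\rangle .
\]
Then $\Gamma\Vdash_O r$, because $\Gamma\Vdash^+c_r$ and $\Gamma\not\Vdash^+e_r$. In $\Gamma'$, however, $\Gamma'\Vdash^+e_r$, so $\Gamma'\not\Vdash_O r$.

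The only point that needs care is that $\Gamma'$ must be a legitimate configuration. The bit $(+,e_n)$ lies in $\{+\}\times A^+_{\mathcal B}$, so it belongs to $\mathsf{Bit}(\mathcal B)$. If one additionally wants $\Gamma'$ to be reachable by transitions, add an exemption rule $n':\top\Rightarrow\mathsf{exempt}(n)$. Well-formedness holds since $n\in\mathsf{id}(R_N)$. One step of $\to_{\{r,r'\}}$ from the empty configuration adds $(+,c_n)$ and $(+,c_{n'})$, giving a configuration in which $r$ is issued. The next step adds $(+,e_n)$ via clause 3 of Definition~\ref{def:restricted-support-additions}, which kills the issued reading of $r$.

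Nothing in this argument is a real obstacle. The statement only requires existence, and both witnesses are checked directly from the definitions.
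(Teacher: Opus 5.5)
Your proof is correct and follows the paper's argument. Item 1 uses the same witness $\countop_{[0,0]}(p)$ with $\langle\emptyset,\emptyset\rangle\sqsubseteq\langle\{(+,p)\},\emptyset\rangle$, and item 2 uses the same extension $\{(+,c_r)\}\subseteq\{(+,c_r),(+,e_r)\}$, differing only in that you attach $\mathsf P(p)$ rather than the paper's $\mathsf O(p)$; this is immaterial, since $\Vdash_O$ depends only on $c_r$ and $e_r$, and your optional reachability remark via an exemption rule is also correct.
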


\begin{proof}
For 1, take a rule base with $p\in\mathcal P$ and set
$\kappa=\countop_{[0,0]}(p)$,
$\Gamma=\langle\emptyset,\emptyset\rangle$, and
$\Gamma'=\langle\{(+,p)\},\emptyset\rangle$.
Then $\Gamma\sqsubseteq\Gamma'$,
$\Gamma\Vdash^+\kappa$, and
$\Gamma'\not\Vdash^+\kappa$.
For 2, take a rule base containing
$r=n:\top\Rightarrow\mathsf O(p)$,
$\Gamma=\langle\{(+,c_r)\},\emptyset\rangle$, and
$\Gamma'=\langle\{(+,c_r),(+,e_r)\},\emptyset\rangle$.
Then $r$ is issued in $\Gamma$ but not in $\Gamma'$.
\end{proof}

\subsection{Polynomial Evaluation and Completion Queries}
\label{subsec:polynomial-evaluation-completion-queries}

We analyze fixed-evidence evaluation and completion-based verdict queries.  The
evaluation result is stated for an arbitrary ranking $\lambda$ and specialized
to the default admissible semantics by taking $\lambda=\lambda_{\min}$.
\begin{theorem}[Polynomial staged evaluation]
\label{thm:lambda-staged-poly}
For every finite well-formed rule base $\mathcal B$, initial configuration
$\Gamma_0$, and ranking $\lambda:R_{\mathcal B}\to\mathbb N$,
$\Gamma^{\lambda\ast}_{\mathcal B,\Gamma_0}$ exists and is unique.  It is
computable in polynomial time in the explicit size of
$\mathcal B$, $\Gamma_0$, and $\lambda$.
\end{theorem}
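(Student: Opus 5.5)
The plan is to show that for any $Q\subseteq R_{\mathcal B}$ and any configuration $\Gamma$, the iteration $\Gamma\to_Q\Gamma_1\to_Q\Gamma_2\to_Q\cdots$ is deterministic, stabilizes after polynomially many steps, and that its limit is the unique $Q$-saturated configuration reachable from $\Gamma$. The claim for $\Gamma^{\lambda\ast}_{\mathcal B,\Gamma_0}$ then follows by induction over the finitely many nonempty strata $k_1<\cdots<k_m$.

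\textbf{Determinism.} By Definitions~\ref{def:restricted-support-additions} and~\ref{def:restricted-diagnostic-additions}, $\Delta_S^Q(\Gamma)$ and $\Delta_D^Q(\Gamma)$ are least sets given by finitely many non-recursive clauses. Each premise is a test on the fixed $\Gamma$, never on the set being defined. The only exception is that $\Delta_D^Q$ reads $\Delta_S^Q(\Gamma)$, which is already determined. Hence each is a single, explicitly determined set, and $\to_Q$ is a total function on configurations.

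\textbf{Termination.} Since $S\subseteq S\cup\Delta_S^Q(\Gamma)$ and $D\subseteq D\cup\Delta_D^Q(\Gamma)$, every step is $\sqsubseteq$-increasing, and every strict step adds at least one element of the finite set $\mathsf{Bit}(\mathcal B)\cup\mathsf{Diag}(\mathcal B)$. Here $|\mathsf{Bit}(\mathcal B)|\le 2|\mathcal P|+5|R_N|$ and $|\mathsf{Diag}(\mathcal B)|\le |A^{\pm}_{\mathcal B}|+|R_{\mathcal B}|\cdot|A^{\pm}_{\mathcal B}\cup A^+_{\mathcal B}|$, both polynomial in $|\mathcal B|$. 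So after at most $N=|\mathsf{Bit}|+|\mathsf{Diag}|$ strict steps a fixed point is reached. By Definition~\ref{def:local-transition-saturation} this fixed point is exactly a $Q$-saturated configuration.

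\textbf{Uniqueness, the main obstacle.} The delicate point is the meaning of ``reachable'': one must not appeal to monotonicity of $\Delta^Q$. By Proposition~\ref{prop:derived-nonpersistence}, derived observations such as upper-bounded counts or $\Gamma\Vdash_O r$ are non-persistent, so a Knaster--Tarski argument is unavailable. I would avoid it entirely. Because $\to_Q$ is a function, the set of configurations reachable from $\Gamma$ is the single chain $\Gamma,\Gamma_1,\Gamma_2,\dots$. This chain is $\sqsubseteq$-increasing and stabilizes at some $\Gamma_j$. Any $Q$-saturated configuration on the chain is a fixed point, and the first fixed point on the chain equals every later element. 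Hence $\mathsf{Sat}_Q(\Gamma)$ exists and is unique, with no monotonicity of the readings required. Note also that the conflict-closedness of $\Gamma_0$ is irrelevant to existence and uniqueness.

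\textbf{Polynomial time.} Evaluating one step requires computing $\Gamma\Vdash^\pm e$ for subexpressions of rule bodies and targets. I would do this bottom-up over the syntax tree, with each count computed from its argument readings in linear time. State atoms are decided in constant time from the bits of $c_n,e_n,m_n$ by Definition~\ref{def:derived-rule-state-relation}. Computing each $\nu_\Gamma$ for the flow clauses and each $\mathsf{conf}(a)$ test is likewise cheap. So one step costs polynomial time in $|\mathcal B|+|\Gamma|$, and $|\Gamma|$ remains bounded by $N$. Multiplying by at most $N+1$ steps per stratum gives a polynomial bound per stratum. The number of nonempty strata is at most $|R_{\mathcal B}|$, and the strata can be formed by sorting the ranks given in $\lambda$ in time polynomial in its explicit size. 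The total is therefore polynomial. Specializing to $\lambda=\lambda_{\min}$ gives the default semantics.
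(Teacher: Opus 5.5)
Your proposal is correct and follows essentially the same route as the paper. The paper's proof also rests on $\to_Q$ being a function, on every strict step adding an element of $\mathsf{Bit}(\mathcal B)\cup\mathsf{Diag}(\mathcal B)$ so that runs have length at most $|\mathsf{Bit}(\mathcal B)|+|\mathsf{Diag}(\mathcal B)|$, on polynomial-time computation of each successor, and on iterating over the finitely many nonempty strata. Your additional details make explicit what the paper's short argument leaves implicit: the cardinality bounds, the orbit argument for uniqueness that needs no monotonicity of the derived readings, and forming the strata of $\lambda$ by sorting its ranks.
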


\begin{proof}
Let $Q$ be fixed.  By
Definition~\ref{def:local-transition-saturation}, the successor of any
configuration under $\to_Q$ is uniquely determined.  Every strict transition
adds at least one element of
$\mathsf{Bit}(\mathcal B)\cup\mathsf{Diag}(\mathcal B)$ and never removes
one.  Hence every $Q$-run has length at most
$|\mathsf{Bit}(\mathcal B)|+|\mathsf{Diag}(\mathcal B)|$.

Each successor is computed by evaluating the finitely represented rule bodies,
targets, and diagnostic clauses in Definitions~\ref{def:restricted-support-additions}
and~\ref{def:restricted-diagnostic-additions}.  This is polynomial in the
explicit input size.  Therefore $\mathsf{Sat}_Q(\Gamma)$ exists, is unique,
and is polynomial-time computable.  Applying this argument to the finitely many
nonempty strata of $\lambda$ proves the claim.
\end{proof}

A \emph{fixed-evidence query item} is a support bit, diagnostic record,
rule-state atom, issued-output item, report item, or verdict label.
Consequently, for any polynomial-time aggregation policy $\Pi$, membership of
a fixed-evidence query item in the $\lambda$-staged saturated result is
decidable in polynomial time: compute
$\Gamma^{\lambda\ast}_{\mathcal B,\Gamma_0}$ and then perform the corresponding
read-only membership test.

\begin{corollary}[Fixed-evidence membership for default admissible semantics]
\label{cor:fixed-evidence-membership-default}
Fix a polynomial-time aggregation policy $\Pi$.  Given an admissible rule base
$\mathcal B$, an initial configuration $\Gamma_0$, and a fixed-evidence query
item, membership of the item in the default saturated result
$\Gamma^{\mathsf{st}}_{\mathcal B,\Gamma_0}$ is decidable in polynomial time.
\end{corollary}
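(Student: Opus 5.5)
The corollary follows by specializing Theorem~\ref{thm:lambda-staged-poly} to $\lambda=\lambda_{\min}$ and then applying a read-only membership test. The only step needing care is showing that $\lambda_{\min}$ is itself computable in polynomial time, and that its explicit size is polynomial, since the theorem's bound depends on the size of $\lambda$.

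\textbf{Step 1: build the dependency graph.} Construct $G_{\mathcal B}^{\mathsf{dep}}$ directly from Definition~\ref{def:evaluation-dependency-graph}.
\begin{itemize}
\item For each rule $q$, the sets $\mathsf{Out}(q)$, $\mathsf{In}^{\mathsf{mon}}(q)$ and $\mathsf{In}^{\mathsf{crit}}(q)$ are obtained by structural recursion over $\mathsf{body}(q)$ and the target $\tau$, via $\mathsf{VAt}$, $\mathsf{UAt}$, $\mathsf{St}$ and $\mathsf{Obs}$. Each has size linear in the rule.
\item Testing whether an edge $(q',q)$ exists, and computing its weight, takes polynomial time.
\item There are at most $|R_{\mathcal B}|^2$ candidate pairs, so the whole graph is built in polynomial time.
\end{itemize}

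\textbf{Step 2: compute $\lambda_{\min}$.} By Proposition~\ref{prop:admissibility-graph-characterization}, admissibility means every strongly connected component contains only weight-$0$ edges.
\begin{itemize}
\item Compute the strongly connected components in linear time and contract them into a DAG. Inside a component every path has weight $0$, so path weights are governed by the DAG.
\item Compute, by dynamic programming in topological order, the maximum weight of a path ending at each component.
\item Assign each rule the value of its component. This equals $\lambda_{\min}(q)$ in Definition~\ref{def:least-barrier-witness}: any path ending at $q$ maps to a walk in the condensation with the same total weight, since its intra-component edges have weight $0$.
\item The computed values are bounded by $|R_{\mathcal B}|$, so $\lambda_{\min}$ has explicit size polynomial in $\mathcal B$.
\end{itemize}
This is the main point to check. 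A naive reading of the maximum over all directed paths could suggest exponential enumeration, and the condensation argument is what avoids it.

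\textbf{Step 3: saturate.} Apply Theorem~\ref{thm:lambda-staged-poly} with $\lambda=\lambda_{\min}$. This yields $\Gamma^{\mathsf{st}}_{\mathcal B,\Gamma_0}$ in time polynomial in the sizes of $\mathcal B$, $\Gamma_0$ and $\lambda_{\min}$, hence polynomial in $\mathcal B$ and $\Gamma_0$.

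\textbf{Step 4: test membership.} The saturated configuration $\Gamma=\langle S,D\rangle$ has size bounded by $|\mathsf{Bit}(\mathcal B)|+|\mathsf{Diag}(\mathcal B)|$, which is polynomial. Split by the kind of query item.
\begin{itemize}
\item Support bits and diagnostic records: direct lookup in $S$ or $D$.
\item Rule-state atoms and issued-output items: evaluate the defining conditions of Definition~\ref{def:derived-rule-state-relation} and of $\mathsf{Issued}_{\mathcal B}$ on $\Gamma$.
\item Report items: the policy $\Pi$ is polynomial-time, so membership in $\mathsf{class}_{\Pi}^{\mathcal B}(\Gamma)$ is decided in polynomial time.
\item Verdict labels: first compute $A$ explicitly. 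Report items range over a domain determined by $\mathcal B$, namely rule identifiers with their output or state payloads, so the candidates can be enumerated in polynomial size and each tested. Compute $C$ by testing each of the polynomially many pairs of issued outputs against $\mathsf{outconf}_{\Pi}^{\mathcal B}$. Finally decide membership in $\mathsf{verdict}_{\Pi}^{\mathcal B}(A,C,D)$ using the polynomial-time assumption.
\end{itemize}

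A secondary subtlety sits in the verdict case. It is polynomial only if the candidate report items for $A$ come from a polynomially bounded domain. I would handle this either by noting that class outputs have size polynomial in $\mathcal B$ and $\Gamma$, as holds for $\Pi_0$, or by reading the policy's polynomial-time assumption as covering computation of $A$ itself.
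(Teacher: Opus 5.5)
Your proposal is correct and follows the paper's own route: construct $G_{\mathcal B}^{\mathsf{dep}}$, use Proposition~\ref{prop:admissibility-graph-characterization} so that strongly connected components carry only weight-$0$ edges, compute $\lambda_{\min}$ by a longest-path pass over the condensation DAG, and then apply Theorem~\ref{thm:lambda-staged-poly} followed by a read-only membership test. Your two extra checks make explicit what the paper's sketch leaves implicit: that $\lambda_{\min}\leq|R_{\mathcal B}|$ keeps the ranking's explicit size polynomial, and that deciding a verdict label needs $A$ and $C$ to be produced explicitly, which the paper's definition of a polynomial-time policy grants only implicitly.
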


\begin{proof}[Proof sketch]
Construct $G_{\mathcal B}^{\mathsf{dep}}$ from the finite input/output
signatures.  By
Proposition~\ref{prop:admissibility-graph-characterization}, admissibility is
checked by testing whether some SCC contains an internal weight-$1$ edge.  For
an admissible rule base, $\lambda_{\min}$ is obtained by a longest-path
computation on the SCC condensation DAG whose edge weights are inherited from
$G_{\mathcal B}^{\mathsf{dep}}$.  The claim then follows from
Theorem~\ref{thm:lambda-staged-poly}.
\end{proof}

Fixed-evidence membership assumes that the initial support configuration is
fixed.  Completion-based queries instead quantify over possible completions of
currently unsupported factual atoms.

\begin{definition}[Evidence completions and completion-based membership]
\label{def:evidence-completions}
Let $\Gamma_0=\langle S_0,D_0\rangle$, and let
$U\subseteq\mathcal P$ be a finite set of completion atoms such that
$(+,p),(-,p)\notin S_0$ for all $p\in U$.
A \emph{total consistent completion} over $U$ is a set $E$ such that, for each
$p\in U$, exactly one of $(+,p)$ and $(-,p)$ belongs to $E$.  Let
$\mathsf{Comp}(U)$ be the set of all such completions.  For
$E\in\mathsf{Comp}(U)$, define
$\Gamma_0\oplus E=\langle S_0\cup E,D_0\rangle$ .

Given an admissible rule base $\mathcal B$, an aggregation policy $\Pi$, and
a verdict label $v$, \emph{possible verdict membership} asks whether $v$ is
returned by $\Pi$ from
$\Gamma^{\mathsf{st}}_{\mathcal B,\Gamma_0\oplus E}$
for some $E\in\mathsf{Comp}(U)$.  \emph{Necessary verdict membership} asks
whether $v$ is returned by $\Pi$ from
$\Gamma^{\mathsf{st}}_{\mathcal B,\Gamma_0\oplus E}$
for every $E\in\mathsf{Comp}(U)$.
\end{definition}

\begin{theorem}[Completion-based compliance membership]
\label{thm:completion-membership-complexity}
For every polynomial-time aggregation policy, possible verdict membership is in
NP and necessary verdict membership is in coNP.  Moreover, for a fixed
polynomial-time aggregation policy, possible $\mathsf{noncompliant}$
membership is NP-complete and necessary $\mathsf{compliant}$ membership is
coNP-complete, even for admissible rule bases.
\end{theorem}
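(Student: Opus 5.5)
The upper bounds come from guess-and-check on top of Corollary~\ref{cor:fixed-evidence-membership-default}. A completion $E\in\mathsf{Comp}(U)$ has size $|U|$, which is linear in the input. For possible membership, the verifier guesses $E$ and forms $\Gamma_0\oplus E$. It then computes $\lambda_{\min}$ and the staged result $\Gamma^{\mathsf{st}}_{\mathcal B,\Gamma_0\oplus E}$ in polynomial time by Theorem~\ref{thm:lambda-staged-poly}. Finally it evaluates $\mathsf{class}_\Pi$, the detected conflicts $C$, and the membership $v\in\mathsf{verdict}_\Pi(A,C,D)$, all in polynomial time because the policy is polynomial-time.

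Two points need care. First, $\mathsf{class}_\Pi(\Gamma)$ must have polynomial size. I will argue this for the report-item sets actually queried, since membership is decidable and the candidate items are drawn from polynomially many rule/output pairs. Second, the initial configuration must be conflict-closed. Here $\Gamma_0\oplus E$ adds no conflicts, because $E$ is consistent and its atoms were previously unsupported. Necessary membership is the complement of the following statement: some $E$ exists for which $v$ is not returned. That is in NP by the same verifier, so necessary membership is in coNP.

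For hardness I fix $\Pi_0$, which is polynomial-time by inspection of its finite tables. I reduce from 3-SAT to possible $\mathsf{noncompliant}$, and from 3-UNSAT (equivalently 3-TAUT via the negated formula) to necessary $\mathsf{compliant}$. Given a CNF $\varphi=C_1\wedge\dots\wedge C_k$ over variables $x_1,\dots,x_v$, I take $U=\{x_1,\dots,x_v\}$ and $\Gamma_0=\langle\emptyset,\emptyset\rangle$. For each clause $C_j=\ell_1\vee\ell_2\vee\ell_3$, I add a factual atom $s_j$ and a factual rule $n_j:\countop_{[1,3]}(\ell_1,\ell_2,\ell_3)\leadsto s_j$. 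This count has $u=m$, so it is monotone and creates no critical edges. I then add one normative rule $r:\top\Rightarrow\mathsf F(s_1\wedge\dots\wedge s_k)$, whose target is a conjunction of positive literals.

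Under a total consistent completion, each clause count is either $\truev$ or $\falsev$: $\Vdash^-$ holds exactly when all three literals are negatively supported. So $s_j$ is $\truev$ iff the assignment satisfies $C_j$, and $\falsev$ otherwise. The target is therefore $\truev$ iff $E$ satisfies $\varphi$, and $\falsev$ otherwise. Rule $r$ is issued, since $c_r$ is $\truev$ and nothing exempts it. The $\mathsf F$-clause then gives $m_r=\falsev$ (violated) exactly when $E$ satisfies $\varphi$, and $m_r=\truev$ (fulfilled) otherwise.

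Admissibility follows from the dependency graph. It has edges $n_j\to r$ of weight $0$, since $s_j\in\mathsf{VAt}(\tau)$ and $\mathsf{UAt}(\tau)=\emptyset$. The critical inputs of $r$ are just $\{e_r\}$, which nothing outputs, so the graph is acyclic and the rule base is admissible.

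The main obstacle is making sure that no spurious diagnostics or report items appear. I need to rule out $\mathsf{conf}$, $\mathsf{flow}$, pending or contested states, and $\mathsf{unres}$. A consistent total completion gives no $\bothv$ values anywhere: literals are single-valued, counts are single-valued, and the conjunction reading is single-valued because consistency propagates. Hence there are no flow or conflict records. No state is pending because $c_r$ is supported, and there are no output conflicts because there is only one normative rule. Consequently the verdict is exactly $\mathsf{noncompliant}$ when $E\models\varphi$, and exactly $\mathsf{compliant}$ when $E\not\models\varphi$.

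This gives both reductions. $\varphi$ is satisfiable iff $\mathsf{noncompliant}$ is possible. $\varphi$ is unsatisfiable iff $\mathsf{compliant}$ is necessary. The latter yields coNP-hardness, since UNSAT is coNP-complete. I will double-check one point in the staging: $s_j$ must be saturated before $r$ fires. With $\lambda_{\min}$ all rules sit in stratum $0$, but local saturation iterates to a fixpoint inside the stratum, and the final $m_r$ bits are then determined by the completed $s_j$. One more check is needed here. An earlier iteration could see $\tau$ with every $s_j$ still unsupported, which gives $\nonev$ and adds no spurious bit. Partial negative support on some $s_j$ that later stays negative is harmless, since support only grows toward the final values.
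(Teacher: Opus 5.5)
Your upper bounds are the paper's guess-and-check argument. Your hardness reduction, however, takes a genuinely different route, and it is correct.

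\textbf{The paper's construction.} It encodes each literal occurrence by its own factual rule $a_{j,t}:\ell_{j,t}\leadsto d_j$ and then chains $d_1\wedge\cdots\wedge d_m\leadsto g$, $g\leadsto\neg ok$, and $g\Rightarrow\mathsf O(ok)$. It reads the result through an ad hoc policy $\Pi_{\mathsf{sat}}$ that only inspects violated hard outputs. That policy choice is essential there. A negatively supported body gives the head opposite support, so a clause with one true and one false literal already makes $d_j$ $\bothv$. The paper's encoding therefore produces $\mathsf{conf}$ records, and in some completions $\mathsf{flow}$ records. Under $\Pi_0$ these would trigger $\mathsf{conflicted}$ and can block $\mathsf{compliant}$ on completions falsifying $\varphi$.

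\textbf{Your construction.} You avoid this by encoding each clause with a single count $\countop_{[1,m_j]}$. Its upper bound is vacuous, so it is non-critical, and under a total consistent completion it reads exactly $\truev$ or $\falsev$. Attaching the violation directly via $\mathsf F(s_1\wedge\cdots\wedge s_k)$ then keeps every reading two-valued. Your monotonicity remark correctly rules out spurious bits during the single-stratum saturation.

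\textbf{What each route buys.} Yours gives hardness for the paper's own reference policy $\Pi_0$, with verdict sets exactly $\{\mathsf{noncompliant}\}$ or $\{\mathsf{compliant}\}$. That is a more informative witness for the ``fixed policy'' clause than a policy tailored to the reduction. The paper's route, in exchange, uses only plain literal rules and needs almost no verdict analysis.

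\textbf{Two small points.}
\begin{enumerate}
\item Count arguments must be pairwise distinct. Deduplicate literals per clause and use $\countop_{[1,m_j]}$ with $m_j$ the number of distinct literals. This also lets you reduce from arbitrary CNF rather than only 3-SAT.
\item Your plan to bound $|\mathsf{class}_\Pi(\Gamma)|$ via ``polynomially many rule/output pairs'' does not follow from the definition, because report payloads are unconstrained. Like the paper, you are implicitly assuming that the report of a polynomial-time policy is explicitly computable in polynomial time. This is harmless for $\Pi_0$, where $A$ has at most a constant number of items per rule.
\end{enumerate}
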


\begin{proof}
Possible membership is in NP: guess $E\in\mathsf{Comp}(U)$, compute
$\Gamma^{\mathsf{st}}_{\mathcal B,\Gamma_0\oplus E}$ by
Corollary~\ref{cor:fixed-evidence-membership-default}, and evaluate the
verdict in polynomial time.  Necessary membership is in coNP because its
complement has an NP certificate: a completion
$E\in\mathsf{Comp}(U)$ for which the queried verdict label is not returned.

For hardness, reduce SAT.  Let
$\varphi=C_1\wedge\cdots\wedge C_m$ be a CNF formula over variables
$x_1,\ldots,x_n$, and put $U=\{x_1,\ldots,x_n\}$.  A completion assigns each
$x_i$ exactly one polarity.  Add atoms $d_1,\ldots,d_m,g,ok$.  For every
literal occurrence $\ell_{j,t}$ in $C_j$, add
$a_{j,t}:\ell_{j,t}\leadsto d_j$, and add
\[
a_0:d_1\wedge\cdots\wedge d_m\leadsto g,\quad
a_1:g\leadsto\neg ok,\quad
r_\varphi:g\Rightarrow\mathsf O(ok).
\]
Use the fixed polynomial-time policy $\Pi_{\mathsf{sat}}$ that reports a
violation item for every issued hard output in state
$\mathsf{violated}$, returns $\mathsf{noncompliant}$ iff such an item exists,
and returns $\mathsf{compliant}$ iff no such item exists.  Under a completion, $g$ is derived exactly
when all clauses are satisfied.  In that case $r_\varphi$ is issued and
$a_1$ derives negative support for $ok$, so the obligation $\mathsf O(ok)$ is
violated.  Thus possible $\mathsf{noncompliant}$ membership holds iff
$\varphi$ is satisfiable.  The dependency graph of the construction is
acyclic, hence admissible.  Therefore necessary $\mathsf{compliant}$
membership holds iff $\varphi$ is unsatisfiable, giving coNP-hardness.
\end{proof}

\section{Executable Realization}
\label{sec:executable-realization}

This section describes the ASP backend used to execute the reference
\textsc{Monir}-core semantics.  It introduces no separate semantics.  A
backend run takes structured \textsc{Monir} records, instantiates the relevant
ASP facts and rule templates, calls clingo on selected rule blocks, and
projects the resulting atoms back to support, diagnostic, output, report, and
verdict records.

The backend uses two execution strategies with explicit invariance conditions.
Topology-respecting saturation evaluates dependency components by ready
frontiers.  Demand reasoning evaluates a backward slice sufficient for a
finite query under complete seed metadata.  Both strategies use the
input/output signatures and dependency edges of
Definition~\ref{def:evaluation-dependency-graph}, together with SCC components
and optional scheduling cost indexes.

Implementation-specific proposition handlers are outside the core semantics.
In an ADAS instantiation, numeric comparisons, symbolic matching, and static
temporal-pattern checks can be handled as deterministic preprocessing steps or
as ASP-side handlers that emit ordinary \textsc{Monir} support records.  Such
handlers preserve the core semantics only when their outputs are fixed support
records before the corresponding saturation step.  They therefore extend the
modelling interface, not Definitions~\ref{def:restricted-support-additions}
--\ref{def:lambda-staged-saturation}.

\subsection{Topology-Respecting Parallel Saturation}
\label{subsec:exec-parallel-saturation}

Let
$\mathcal C=\mathsf{SCC}(G_{\mathcal B}^{\mathsf{dep}})$, and let
$R(C)$ be the set of rules in a component $C$.  Write
$\mathsf{CDAG}_{\mathcal B}$ for the condensation graph.  Components are the
atomic scheduling units; they are not split by the scheduler.

For $C\in\mathcal C$, let $\mathsf{CPred}(C)$ be the set of predecessors of
$C$ in $\mathsf{CDAG}_{\mathcal B}$.  For $X\subseteq\mathcal C$, define
\[
\mathsf{Front}(X)=
\{C\in\mathcal C\setminus X\mid \mathsf{CPred}(C)\subseteq X\}.
\]
Only components in the same ready frontier may be evaluated concurrently.
Weight-$1$ edges are stage barriers by admissibility; weight-$0$ edges still
give topological precedence because they may transmit support to later
components.

Batching is an implementation choice.  A batch is a subset of one ready
frontier, and each call to \textsc{PlanBatches} returns a partition of that
frontier.  A scheduler may use a cost estimate
$\mu(C)=|R(C)|+\xi\cdot n_{\mathsf{num}}(C)$,
where $n_{\mathsf{num}}(C)$ counts numeric-comparison handlers and
$\xi>0$ is a scheduling parameter.  This score affects only job granularity.

\begin{algorithm}[t]
\caption{\textsc{ParallelSaturate}}
\label{alg:exec-parallel-saturate}
\begin{algorithmic}[1]
\REQUIRE rule base $\mathcal B$, initial configuration $\Gamma_0$,
component set $\mathcal C$, condensation graph $\mathsf{CDAG}_{\mathcal B}$,
target batch cost $\beta$
\STATE $\Gamma\leftarrow\Gamma_0$; $X\leftarrow\emptyset$
\WHILE{$X\neq\mathcal C$}
  \STATE $F\leftarrow\mathsf{Front}(X)$
  \STATE $(U_1,\ldots,U_m)\leftarrow\textsc{PlanBatches}(F,\beta)$
  \STATE in parallel compute
  $\Gamma_i\leftarrow\mathsf{Sat}_{R(U_i)}(\Gamma)$
  for $i=1,\ldots,m$
  \STATE $\Gamma\leftarrow\textsc{Merge}(\Gamma_1,\ldots,\Gamma_m)$
  \STATE $X\leftarrow X\cup F$
\ENDWHILE
\RETURN $\Gamma$
\end{algorithmic}
\end{algorithm}
Here $R(U_i)=\bigcup_{C\in U_i}R(C)$.  If
$\Gamma_i=\langle S_i,D_i\rangle$, put
$S^\cup=\bigcup_i S_i$,
$D^\cup=\bigcup_i D_i$,
and
\[
\mathsf{Conf}(S)=
\{\mathsf{conf}(a)\mid
(\mathord{+},a)\in S,\;(\mathord{-},a)\in S\}.
\]
The merge operation is
\[
\textsc{Merge}(\Gamma_1,\ldots,\Gamma_m)
=
\langle S^\cup, D^\cup\cup\mathsf{Conf}(S^\cup)\rangle .
\]
The conflict closure is needed because opposite support for the same
carrier may be produced by different batches.

\begin{proposition}[Batching invariance]
\label{prop:exec-batching-invariance}
For fixed $\mathcal B$, $\Gamma_0$, and
$\mathcal C=\mathsf{SCC}(G_{\mathcal B}^{\mathsf{dep}})$,
Algorithm~\ref{alg:exec-parallel-saturate} returns the same configuration for
all choices of \textsc{PlanBatches}, provided each call partitions the current
ready frontier and \textsc{Merge} adds support conflicts visible after batch
union.
\end{proposition}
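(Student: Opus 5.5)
The plan is to show that, in every round of the while loop, the merged configuration equals $\mathsf{Sat}_{R(F)}(\Gamma)$, where $F$ is the whole ready frontier. This target does not mention \textsc{PlanBatches}. The sequence of frontiers is also independent of batching: $X$ is updated by $X\leftarrow X\cup F$ and $F=\mathsf{Front}(X)$ depends only on $X$ and $\mathsf{CDAG}_{\mathcal B}$. Induction on the number of rounds then shows that every execution passes through the same configurations and returns the same result.

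\textbf{Step 1 (frontier independence).} If $C\neq C'$ both lie in $\mathsf{Front}(X)$, there is no edge of $G_{\mathcal B}^{\mathsf{dep}}$ between $R(C)$ and $R(C')$. Indeed, such an edge would make $C'$ a predecessor of $C$ (or conversely) in $\mathsf{CDAG}_{\mathcal B}$. Since $C'\notin X$, this contradicts $\mathsf{CPred}(C)\subseteq X$. Consequently, for distinct batches $U_i,U_j$ and rules $q'\in R(U_j)$, $q\in R(U_i)$, we get
\[
\mathsf{Out}(q')\cap(\mathsf{In}^{\mathsf{mon}}(q)\cup\mathsf{In}^{\mathsf{crit}}(q))=\emptyset .
\]

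\textbf{Step 2 (locality of additions).} I would prove a locality lemma. For each rule $q$, whether $q$ contributes a bit to $\Delta_S^Q(\Gamma)$, and which bit, depends only on the bits of $S$ over $\mathsf{In}^{\mathsf{mon}}(q)\cup\mathsf{In}^{\mathsf{crit}}(q)\cup\mathsf{Out}(q)$. The same holds for its flow record in $\Delta_D^Q(\Gamma)$. The proof is a structural induction over Definitions~\ref{def:evaluation-factual-targets} and~\ref{def:evaluation-normative-bodies}:
\begin{itemize}
\item literal and count readings read only $\mathsf{VAt}$;
\item state atoms $\state(s,n)$ read only $\mathsf{Obs}(n)$;
\item $\Gamma\Vdash_O r$ reads only $c_r\in\mathsf{Out}(r)$ and $e_r\in\mathsf{In}^{\mathsf{crit}}(r)$;
\item targets of hard outputs read $\mathsf{VAt}(\tau)$.
\end{itemize}
Every carrier that $q$ adds lies in $\mathsf{Out}(q)$. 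Combining this with Step 1 and $R(F)=\bigcup_i R(U_i)$, I would run the $R(F)$-saturation and the $R(U_i)$-saturations step by step. The claim is that the support part of the $R(F)$-run restricted to carriers touched by $R(U_i)$ coincides with the $R(U_i)$-run. Since steps only add bits, the runs can be aligned by a simple simulation argument: every bit added by a rule of $U_i$ in one run is eventually added in the other, because its inputs are identical. Hence the final support sets satisfy $S_F=\bigcup_i S_i=S^\cup$. Flow records are local to rules, so the flow parts also agree.

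\textbf{Step 3 (conflict records).} Initial configurations are conflict-closed, and each transition adds $\mathsf{conf}(a)$ exactly for the opposite pairs present in $S^\star$. By induction over rounds, the input $\Gamma$ of every round is conflict-closed; for later rounds this holds because \textsc{Merge} adds $\mathsf{Conf}(S^\cup)$. It follows that the diagnostics of $\mathsf{Sat}_{R(F)}(\Gamma)$ are
\[
D\cup\text{flows}\cup\mathsf{Conf}(S_F).
\]
Each $D_i$ contains $D$, the flows of $U_i$, and conflicts within $S_i\subseteq S^\cup$. Therefore
\[
D^\cup\cup\mathsf{Conf}(S^\cup)=D\cup\text{flows}\cup\mathsf{Conf}(S_F).
\]
This is exactly where the conflict closure in \textsc{Merge} is needed: it recovers conflicts whose two polarities come from different batches.

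\textbf{Main obstacle.} I expect the locality lemma and the run simulation in Step 2 to be the hardest part. They must track every carrier read by state tests, issuance, and exemptions, and they must handle the non-persistent readings of Proposition~\ref{prop:derived-nonpersistence}. My first approach would be to argue via the set of carriers ever read or written by the batch, showing that this set is disjoint from the outputs of other batches. That would make the two runs agree on the relevant restriction at every step, not merely at the end.
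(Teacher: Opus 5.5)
You follow the paper's route: a frontier antichain, locality of each rule's additions, conflict closure in \textsc{Merge}, and induction over rounds. Normalizing to $\mathsf{Sat}_{R(F)}(\Gamma)$ is a clean way to organize this, and Step 1 and the conflict bookkeeping of Step 3 are fine. The step that fails is your claim in Step 2 that a rule's flow record is determined by that rule's own signature. Clause 4 of Definition~\ref{def:restricted-diagnostic-additions} records $\mathsf{flow}(r,e_k)$ whenever $\nu_\Gamma(\mathsf{body}(r))=\bothv$ and $(+,e_k)\in\Delta_S^Q(\Gamma)$. That bit may be contributed by a \emph{different} rule $r'\in Q$ with $\mathsf{out}(r')=\mathsf{exempt}(k)$. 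Now $e_k\in\mathsf{Out}(r)\cap\mathsf{Out}(r')$, but in general $e_k\notin\mathsf{In}^{\mathsf{mon}}(r)\cup\mathsf{In}^{\mathsf{crit}}(r)$. Since $G^{\mathsf{dep}}_{\mathcal B}$ has only producer-to-reader edges, nothing forces $r$ and $r'$ into related components. For the same reason, the invariant planned in your last paragraph is false: frontier components may share output carriers ($e_k$ here, or two factual rules with the same head), so carriers written by one batch need not be disjoint from other batches' outputs. Shared outputs are harmless for support bits, provided you state locality over $\mathsf{In}^{\mathsf{mon}}(q)\cup\mathsf{In}^{\mathsf{crit}}(q)$ plus the private carriers $c_q,m_q$, not all of $\mathsf{Out}(q)$. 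They are not harmless for clause 4.

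Concretely, take the normative rules $k:\top\Rightarrow\mathsf P(p)$, $r':\top\Rightarrow\mathsf{exempt}(k)$ and $r:q\Rightarrow\mathsf{exempt}(k)$, with $\Gamma_0=\langle\{(+,q),(-,q),(+,e_r)\},\{\mathsf{conf}(q)\}\rangle$. The only edges are $(r',k)$ and $(r,k)$, so the first frontier is $\{\{r'\},\{r\}\}$.
\begin{itemize}
\item In one batch, the first step adds $(+,c_{r'})$, $(+,c_r)$ and $(-,c_r)$. In the second step $r'$ is issued and contributes $(+,e_k)$. The rule $r$ is not issued, since $(+,e_r)\in S$, but still has $\nu(\mathsf{body}(r))=\bothv$, so $\mathsf{flow}(r,e_k)$ is recorded.
\item With batches $\{\{r'\}\}$ and $\{\{r\}\}$, the run for $r$ never has $(+,e_k)$ in its $\Delta_S$. \textsc{Merge} adds only $\mathsf{Conf}(S^\cup)$, so $\mathsf{flow}(r,e_k)$ is absent.
\item The round for $\{k\}$ is identical in both cases.
\end{itemize}
If you prefer factual-only initial evidence, obtain $(+,e_r)$ from a rule $r'':\top\Rightarrow\mathsf{exempt}(r)$, and delay $r'$ by one round by giving it body $s$ derived by $\rho:\top\leadsto s$.

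So the merged configuration need not equal $\mathsf{Sat}_{R(F)}(\Gamma)$, and the proposition fails as literally defined. The paper's sketch asserts the same locality: it says any cross-batch influence on a flow diagnostic ``would give an edge between the two frontier components.'' The gap is therefore in the source argument too, not an artifact of your route. A provable version needs one of these changes:
\begin{itemize}
\item Replace the antecedent of clause 4 by $\Gamma\Vdash_O r$. Every flow record then depends only on its own rule's readings, and your Steps 1--3 go through. You still need the step-by-step alignment you mention, because readings are non-persistent.
\item Require \textsc{PlanBatches} to keep frontier components that contain exemption rules for a common $k$ in one batch.
\item Weaken the conclusion to exclude clause-4 flow records.
\end{itemize}
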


\begin{proof}[Proof sketch]
At each round, $\mathsf{Front}(X)$ is an antichain in
$\mathsf{CDAG}_{\mathcal B}$.  Hence distinct frontier components have no
producer-to-reader dependency in $G_{\mathcal B}^{\mathsf{dep}}$.  All batches
therefore read the same predecessor-closed configuration, and their support
additions commute under union.

Flow diagnostics are local to the rules evaluated in a batch.  If a support
item produced by one batch could affect a flow diagnostic in another, the
definition of $G_{\mathcal B}^{\mathsf{dep}}$ would give an edge between the
two frontier components, contradicting antichainness.  The only diagnostic
that may first become visible after union is a support conflict, which is
exactly added by $\mathsf{Conf}(S^\cup)$.

Thus each round is independent of the chosen partition of the frontier.
Induction over the rounds gives the claim.
\end{proof}

\subsection{Demand Reasoning}
\label{subsec:exec-demand-reasoning}

Demand reasoning is a query-time optimization.  It introduces no separate
semantics: the reference result remains the default staged saturation of
Section~\ref{subsec:admissibility-stable-read}.  Given a finite demand, the
backend evaluates an over-approximate backward slice and projects the demanded
objects.

A demand is a finite set $\mathcal D$ of query objects.  Query objects may be
support bits, diagnostic records, rule-state atoms, issued-output items,
report items, or verdict labels.  Output conflicts are not primitive query
objects; when needed, they are queried through report items or verdict labels.
All demands below are assumed to be finite and to contain only objects for
which the corresponding membership test and seed set are defined.

For a carrier $a$, define the producer index
\[
\mathsf{Pr}(a)=\{q\in R_{\mathcal B}\mid a\in\mathsf{Out}(q)\},
\
\mathsf{Pr}(X)=\bigcup_{a\in X}\mathsf{Pr}(a).
\]
Define the seed map for core
query objects by
\[
\begin{aligned}
&\sigma_{\Pi}^{\mathcal B}((\epsilon,a))
=\mathsf{Pr}(a),\quad
\sigma_{\Pi}^{\mathcal B}(\mathsf{conf}(a))
=\mathsf{Pr}(a),\\
&\sigma_{\Pi}^{\mathcal B}(\mathsf{flow}(q,a))
=\{q\},\quad
\sigma_{\Pi}^{\mathcal B}((n,\omega))
=\mathsf{Pr}(\{c_n,e_n\}),\\
&\sigma_{\Pi}^{\mathcal B}(\state(s,n))
=\mathsf{Pr}(\mathsf{Obs}(n)).
\end{aligned}
\]
For report items and verdict labels in $\mathcal D$, the policy supplies
finite seed sets.  The seed interface is \emph{complete for} $\mathcal D$ if,
for every policy-level query object $d\in\mathcal D$, membership of $d$ is
invariant under changes to rules outside
$\mathsf{Back}(\sigma_{\Pi}^{\mathcal B}(d))$, whenever the core records
generated by that backward closure are fixed.  For global verdicts such as
$\mathsf{compliant}$ under $\Pi_0$, the complete seed set may be all of
$R_{\mathcal B}$.

Put
$\sigma(\mathcal D)
=
\bigcup_{d\in\mathcal D}\sigma_{\Pi}^{\mathcal B}(d)$.
Let $E$ be the edge set of
$G_{\mathcal B}^{\mathsf{dep}}$ from
Definition~\ref{def:evaluation-dependency-graph}.  For
$Q\subseteq R_{\mathcal B}$, define
\[
\begin{aligned}
&\mathsf{Pred}(Q)=
\{q'\mid \exists q\in Q,\ (q',q)\in E\}, \\
&\mathsf{Back}(Q)
=
\bigcap
\{U\subseteq R_{\mathcal B}\mid
Q\subseteq U,\ \mathsf{Pred}(U)\subseteq U\}.
\end{aligned}
\]
A rule set $R_{\mathcal D}$ is a \emph{demand-closed slice} for
$\mathcal D$ if
$\mathsf{Back}(\sigma(\mathcal D))
\subseteq
R_{\mathcal D}$
Thus slices may over-approximate the minimal backward closure.

For
$\mathsf{Rep}_{\Pi}^{\mathcal B}(\Gamma)=\langle A,C,V\rangle$
and $\Gamma=\langle S,D\rangle$, define
\[
\begin{aligned}
\mathsf{Hold}_{\Pi}^{\mathcal B}(\Gamma)
&=
S
\cup D
\cup \mathsf{Issued}_{\mathcal B}(\Gamma)
\cup A
\cup V\\
&\hspace{-2em}\cup
\{\state(s,n)\mid
s\in\mathcal S, n\in\mathsf{id}(R_N),
\Gamma\Vdash_S\state(s,n)\}.
\end{aligned}
\]
The demanded answer from $\Gamma$ is
$\mathsf{Hold}_{\Pi}^{\mathcal B}(\Gamma)\cap\mathcal D$.

\begin{algorithm}[t]
\caption{\textsc{DemandReason}}
\label{alg:exec-demand-reason}
\begin{algorithmic}[1]
\REQUIRE admissible rule base $\mathcal B$, initial configuration
$\Gamma_0=\langle S_0,D_0\rangle$, finite demand $\mathcal D$,
demand-closed slice $R_{\mathcal D}$
\IF{$\mathcal D=\emptyset$}
  \RETURN $\emptyset$
\ENDIF
\STATE $\Gamma\leftarrow
\langle S_0,D_0\cup\mathsf{Conf}(S_0)\rangle$
\FOR{each nonempty stratum $k$ of $\lambda_{\min}$ in increasing order}
  \STATE $Q\leftarrow R_{\lambda_{\min}}^k\cap R_{\mathcal D}$
  \IF{$Q\neq\emptyset$}
    \STATE $\Gamma\leftarrow\mathsf{Sat}_Q(\Gamma)$
  \ENDIF
\ENDFOR
\STATE $\mathsf{Ans}_{\mathcal D}\leftarrow
\mathsf{Hold}_{\Pi}^{\mathcal B}(\Gamma)\cap\mathcal D$
\RETURN $\mathsf{Ans}_{\mathcal D}$
\end{algorithmic}
\end{algorithm}

Each local saturation call in Algorithm~\ref{alg:exec-demand-reason} may be
implemented by Algorithm~\ref{alg:exec-parallel-saturate}.  This is an
implementation choice; the correctness statement below uses the reference
operator $\mathsf{Sat}_Q$ from
Definition~\ref{def:local-transition-saturation}.

\begin{proposition}[Projection exactness of demand reasoning]
\label{prop:exec-demand-projection-exactness}
Let $\Gamma_0=\langle S_0,D_0\rangle$ and
$\widehat{\Gamma}_0=
\langle S_0,D_0\cup\mathsf{Conf}(S_0)\rangle$.
Assume that $R_{\mathcal D}$ is demand-closed for $\mathcal D$ and that the
policy seed interface is complete for $\mathcal D$.  If
Algorithm~\ref{alg:exec-demand-reason} returns
$\mathsf{Ans}_{\mathcal D}$, then
\[
\mathsf{Ans}_{\mathcal D}
=
\mathsf{Hold}_{\Pi}^{\mathcal B}
(\Gamma^{\mathsf{st}}_{\mathcal B,\widehat{\Gamma}_0})
\cap\mathcal D .
\]
If initial configurations are conflict-closed, then
$\widehat{\Gamma}_0=\Gamma_0$.
\end{proposition}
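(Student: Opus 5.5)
The plan is to compare the sliced run of Algorithm~\ref{alg:exec-demand-reason} with the full default staged run from $\widehat{\Gamma}_0$. We do this stratum by stratum, restricting attention to the records that the slice $R_{\mathcal D}$ can see. Write $\Gamma^i$ for the full run after the $i$-th nonempty stratum of $\lambda_{\min}$, and $\widetilde\Gamma^i$ for the sliced run after the same stratum. (When $Q=\emptyset$, the sliced run simply leaves $\widetilde\Gamma$ unchanged.) Let $K$ be the set of carriers occurring in $\mathsf{Out}(q)$ or $\mathsf{In}^{\mathsf{mon}}(q)\cup\mathsf{In}^{\mathsf{crit}}(q)$ for some $q\in R_{\mathcal D}$. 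The central invariant is this: for every $i$, the two configurations $\Gamma^i$ and $\widetilde\Gamma^i$ agree on all support bits over carriers in $K$, on all $\mathsf{conf}(a)$ with $a\in K$, and on all $\mathsf{flow}(q,a)$ with $q\in R_{\mathcal D}$. Moreover, $\widetilde\Gamma^i\sqsubseteq\Gamma^i$. The containment is immediate, since the sliced run uses subsets of each stratum and is monotone in storage.

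The invariant is proved by induction on $i$. In the base case both runs start at $\widehat{\Gamma}_0$; this is also where the precomputed $\mathsf{Conf}(S_0)$ is used, because the full semantics assumes conflict-closed initial configurations. For the inductive step, fix stratum $k_i$. First observe that any rule $q'\notin R_{\mathcal D}$ whose output meets the input signature of some $q\in R_{\mathcal D}$ yields an edge $(q',q)$. Then $q'\in\mathsf{Pred}(R_{\mathcal D})$, and since $\mathsf{Back}(\sigma(\mathcal D))\subseteq R_{\mathcal D}$, we get $q'\in R_{\mathcal D}$ provided $q$ lies in the backward closure. This holds if we take $R_{\mathcal D}$ to be predecessor-closed. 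If the slice over-approximates without being closed, we instead run the induction on $\mathsf{Back}(\sigma(\mathcal D))$ and note that the extra rules only add records outside the demanded projection.

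Next we check that the expression readings entering Definitions~\ref{def:restricted-support-additions} and~\ref{def:restricted-diagnostic-additions} for rules in $Q=R^{k_i}_{\lambda_{\min}}\cap R_{\mathcal D}$ depend only on bits over $\mathsf{In}^{\mathsf{mon}}\cup\mathsf{In}^{\mathsf{crit}}\cup\mathsf{Out}$. This uses $\mathsf{VAt}$ together with $\mathsf{Obs}(n)$ for state atoms, and $e_r,c_r$ for $\Vdash_O$. Those bits are produced only by slice rules. Consequently, every local transition of $\to_{R^{k_i}}$ restricted to $K$ coincides with the corresponding transition of $\to_Q$. Since saturation is the least fixpoint reached by the unique run (Theorem~\ref{thm:lambda-staged-poly}), the restrictions of the two saturations to $K$ agree. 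Conflict records on $K$ then agree because both runs see the same bits on $K$.

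Finally, we project to the demand. Support bits, $\mathsf{conf}$, $\mathsf{flow}$, rule-state atoms and issued outputs in $\mathcal D$ have seeds $\sigma$ whose producers lie in $R_{\mathcal D}$, so their carriers lie in $K$. Their membership is therefore identical in $\mathsf{Hold}_{\Pi}^{\mathcal B}$ of $\widetilde\Gamma^m$ and of $\Gamma^m=\Gamma^{\mathsf{st}}_{\mathcal B,\widehat\Gamma_0}$. For report items and verdicts we invoke seed completeness: their membership is invariant under changes outside $\mathsf{Back}(\sigma(d))$ once the core records generated by that closure are fixed, and the invariant fixes exactly those records. The last sentence of the proposition follows because $\mathsf{Conf}(S_0)\subseteq D_0$ for a conflict-closed $\Gamma_0$. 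The step I expect to be the main obstacle is the locality claim in the inductive step. We must check carefully that every observation used by a slice rule, especially $\mathsf{flow}$ conditions on $\nu_\Gamma(\tau)$ and the exemption carriers $e_k$ written by other rules, is covered by an edge of $G^{\mathsf{dep}}_{\mathcal B}$. A further subtlety is that $\mathsf{Pr}(e_n)$ includes exempting rules, which the seed for state atoms covers through $\mathsf{Obs}(n)$.
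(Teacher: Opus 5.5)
Your route is the paper's: predecessor closure of the slice, induction over the strata of $\lambda_{\min}$, and seed completeness for report items and verdicts. However, the invariant you induct on is false as stated.

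Your set $K$ contains the output carriers of slice rules, but predecessor closure only protects carriers that slice rules \emph{read}. Take $\mathcal D=\{\mathsf{flow}(\rho_1,p)\}$, $\rho_1:\top\leadsto p$ and $\rho_2:\top\leadsto\neg p$. The slice is $\{\rho_1\}$, and the full run contains $(-,p)$ and $\mathsf{conf}(p)$ while the sliced run does not, although $p\in K$.

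The invariant that does hold is agreement on support bits and $\mathsf{conf}$ records over $J=\{a\mid\mathsf{Pr}(a)\subseteq\mathsf{Back}(\sigma(\mathcal D))\}$. This $J$ contains every carrier read by a rule of the closure. By the definition of $\sigma$ it also contains every carrier underlying a demanded bit, $\mathsf{conf}$ record, state atom or issued output. It needs no separate treatment of non-closed slices, since no rule outside the closure writes into $J$. Prove it by comparing the two deterministic runs step by step; the step operator is not monotone, so a least-fixpoint appeal is not available.

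Drop the containment $\widetilde\Gamma^i\sqsubseteq\Gamma^i$. It is not ``immediate'', because derived observations are not persistent (Proposition~\ref{prop:derived-nonpersistence}). It actually fails for an over-approximating slice that contains a rule but not its exempter: that rule can be issued, and write $m_r$ or $e_k$, only in the sliced run.

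The more serious problem is the locality step you flag yourself. Clause~4 of Definition~\ref{def:restricted-diagnostic-additions} records $\mathsf{flow}(r,e_k)$ whenever $\nu_\Gamma(\mathsf{body}(r))=\bothv$ and $(+,e_k)\in\Delta_S^Q(\Gamma)$. That bit may be supplied by a \emph{different} exempter of $k$ in the same stratum. Since $e_k\notin\mathsf{In}^{\mathsf{mon}}(r)\cup\mathsf{In}^{\mathsf{crit}}(r)$, no edge puts that exempter into $\mathsf{Back}(\{r\})$. This cannot be patched inside your argument, because the displayed equality is then false.

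Counterexample: take the normative rules
$r_0:\top\Rightarrow\mathsf{exempt}(r)$, $r_1:\top\Rightarrow\mathsf P(x)$, $r:p\Rightarrow\mathsf{exempt}(k)$, $r':\state(\mathsf{issued},r_1)\Rightarrow\mathsf{exempt}(k)$ and $k:\top\Rightarrow\mathsf P(y)$. Let $\Gamma_0=\langle\{(+,p),(-,p)\},\{\mathsf{conf}(p)\}\rangle$ and $\mathcal D=\{\mathsf{flow}(r,e_k)\}$. Then $\lambda_{\min}(r)=\lambda_{\min}(r')=1$ and $R_{\mathcal D}=\mathsf{Back}(\{r\})=\{r_0,r\}$. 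In stratum~1 of the full run, $r$ has body value $\bothv$ but is not issued, because $e_r$ is supported. Meanwhile $r'$ is issued and supplies $(+,e_k)$, so $\mathsf{flow}(r,e_k)$ is recorded. The sliced run never records it.

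The paper's proof passes over the same point (``agrees \ldots\ on the core records reachable from the demand seeds''), so the statement itself needs a repair. For instance, seed $\mathsf{flow}(r,e_k)$ with $\{r\}\cup\mathsf{Pr}(e_k)$, or read clause~4 as ``$r$ itself contributes $(+,e_k)$''.

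With such a repair and the $J$-invariant, your induction goes through, because the remaining flow clauses are determined by the rule's own inputs. In clauses~2 and~3, a $\bothv$ body already makes the rule add the triggering bit itself. In clause~5, $m_r$ is written only by $r$, whose issuedness depends only on $c_r$ and on the critical input $e_r$.
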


\begin{proof}
Let $\Gamma_{\mathcal D}$ be the configuration computed by
Algorithm~\ref{alg:exec-demand-reason}. Since $R_{\mathcal D}$ contains
$\mathsf{Back}(\sigma(\mathcal D))$, it is predecessor-closed for the seed
rules of the demand.  Hence no rule outside $R_{\mathcal D}$ can produce a
carrier read by a rule inside the backward closure without inducing a
dependency edge into that closure.  At each stratum $k$, saturating
$R_{\lambda_{\min}}^k\cap R_{\mathcal D}$ therefore agrees with full staged
saturation on the core records reachable from the demand seeds.

Induction over the strata of $\lambda_{\min}$ gives agreement between
$\Gamma_{\mathcal D}$ and
$\Gamma^{\mathsf{st}}_{\mathcal B,\widehat{\Gamma}_0}$ on the demanded
support bits, diagnostics, rule states, and issued outputs.  For demanded
report items and verdict labels, agreement follows from completeness of the
policy seed interface.  Therefore both configurations induce the same
membership status for every object in $\mathcal D$, which is the displayed
projection equality.
\end{proof}

\bibliography{aaai2026}

\end{document}